\definecolor{grey}{rgb}{0.9,0.9,0.9}
\theoremstyle{plain}
\newtheorem{theorem}{Theorem}[section]
\newtheorem{proposition}[theorem]{Proposition}
\newtheorem{lemma}[theorem]{Lemma}
\theoremstyle{definition}
\newtheorem{definition}[theorem]{Definition}
\theoremstyle{remark}
\definecolor{npcolor}{RGB}{255,0,255}
\definecolor{tbcolor}{RGB}{0,150,0}
\definecolor{ldcolor}{RGB}{200,30,50}
\newcommand{\name}{\textsc{GreatGramma}\xspace}
\def\table{token spanner table\xspace}
\def\Table{Token spanner table\xspace}
\newcommand{\rone}{(\emph{i})\xspace}
\newcommand{\rtwo}{(\emph{ii})\xspace}
\newcommand{\tname}[1]{\textsc{#1}\xspace}
\newcommand{\syncode}{\tname{SynCode}}
\newcommand{\domino}{\tname{Domino}}
\newcommand{\outlines}{\tname{Outlines}}
\newcommand{\xgrammar}{\tname{XGrammar}}
\newcommand{\llguidance}{\tname{LLGuidance}}
\newcommand{\pfun}{\mathrel{\ooalign{\hfil$\mapstochar$\hfil\cr$\to$\cr}}}
\newcommand{\grammar}{\mathcal{G}}
\newcommand{\nonterms}{\mathcal{N}}
\newcommand{\terms}{\Gamma}
\newcommand{\alphabet}{\Sigma}
\newcommand{\lang}{\mathcal{L}}
\newcommand{\langpre}{\lang_{\mathrm{prefix}}}
\newcommand{\ruleset}{\mathcal{R}}
\newcommand{\start}{S}
\newcommand{\term}{T}
\newcommand{\nterm}{A}
\newcommand{\sent}{w}
\newcommand{\token}{t}
\newcommand{\step}{\Rightarrow}
\newcommand{\manystep}{\Rightarrow^*}
\newcommand{\eos}{\texttt{EOS}\xspace}
\newcommand{\vocab}{\mathcal{V}}
\newcommand{\automaton}{\mathcal{A}}
\newcommand{\pushdown}{\mathcal{P}}
\newcommand{\transducer}{\mathcal{T}}
\newcommand{\lexer}{\mathtt{Lex}}
\newcommand{\inversetable}{T_{\textrm{inv}}}
\newcommand{\realizable}[2]{\mathit{Re}_{#2 \circ #1}}
\newcommand{\producible}{\mathit{Prod}}
\lstdefinelanguage{nolang}{
	literate=%
    {->}{$\rightarrow$}2
    {=.=}{$\doteq$}1
    {==}{$=$}1
    {!=}{$\neq$}1
    {&&}{$\land$}1
    {||}{$\lor$}1
    {<}{$<$}1
    {>}{$>$}1
    {<=}{$\le$}1
    {>=}{$\ge$}1
	,
	numbers=none,
	basicstyle=\ttfamily,
	commentstyle=\itshape\color{commentgreen},
	keywordstyle=\bfseries,
	ndkeywordstyle=\bfseries
}
\newcommand{\war}{{\color{red}X}\xspace}
\def\nx{17.71x\xspace}
\begin{document}

\twocolumn[
\icmltitle{Flexible and Efficient Grammar-Constrained Decoding}




\begin{icmlauthorlist}
\icmlauthor{Kanghee Park}{ucsd}
\icmlauthor{Timothy Zhou}{ucsd}
\icmlauthor{Loris D'Antoni}{ucsd}
\end{icmlauthorlist}

\icmlaffiliation{ucsd}{Department of Computer Science and Engineering, UCSD, San Diego, USA}

\icmlcorrespondingauthor{Kanghee park}{kap022@ucsd.edu}

\icmlkeywords{Language Models, Decoding, Context-free Grammars}

\vskip 0.3in
]



\printAffiliationsAndNotice{}  

\begin{abstract}
Large Language Models (LLMs) are often asked to generate structured outputs that obey precise syntactic rules, such as code snippets or formatted data. Grammar-constrained decoding (GCD) can guarantee that LLM outputs matches such rules by masking out tokens that will provably lead to outputs that do not belong to a specified context-free grammar (CFG). To guarantee soundness, GCD algorithms have to compute how a given LLM subword tokenizer can ``align'' with the tokens used 
 by a given context-free grammar and compute token masks based on this information. Doing so efficiently is challenging and existing GCD algorithms require tens of minutes to preprocess common grammars. We present a new GCD algorithm together with an implementation that offers \nx faster offline preprocessing than existing approaches while preserving state-of-the-art efficiency in online mask computation.
\end{abstract}

\section{Introduction}\label{sec:intro}
Constrained decoding guides the output of Large Language Models (LLMs) by greedily enforcing user-given constraints in highly structured settings.
\textit{Grammar-constrained decoding} (GCD)~\cite{geng2024grammarconstrained}  refers to the specific case where the constraint is given as a formal grammar that the LLM's output must conform to.
This is done by applying parsing algorithms to build an automaton that interfaces with the LLM's decoding algorithm to mask away \textit{all tokens that will provably lead to outputs not in the grammar}.
For example, GCD can be used to ensure that an LLM generates programs that only use a specific set of function names.

Parsing algorithms for Context-Free Grammars (CFG) achieve efficiency by processing input strings in two phases. Terminals---e.g., variable names or string literals---are recognized by a lexer in a preprocessing phase, while the grammatical structure of how terminals can be arranged---e.g., that the body of a loop should be a sequence of statements---is enforced by an automaton operating over lexed terminals.
A key challenge with implementing GCD algorithms is that \textit{the tokens used by subword tokenizers in LLMs do not align with the terminals used by parsers}.

Because of this misalignment, GCD approaches either incur high \textit{online token-masking overhead (>1 second per token)}, or high \textit{offline preprocessing costs (>30 minutes)} to precompute a lookup table that relates LLM tokens to terminals \cite{beurer2024domino, ugare2024syncode}. Thus, existing GCD algorithms are impractical for domains where the constraining grammar frequently changes.
Examples of such domains include \textit{program synthesis}~\cite{alur2019syguscomp}, where a grammar is provided for every program one may try to synthesize, and \textit{grammar prompting} \cite{wang2024grammar}, where  grammars are predicted from a given prompt to then guide the LLM towards a particular output structure. 

In this paper, we introduce a new approach for grammar-constrained decoding that is both \emph{flexible}---i.e., handles diverse grammars without incurring prohibitive offline preprocessing costs---and \emph{efficient}---i.e., maintains state-of-the-art efficiency in online token masking.
The key innovation is a combined analysis of the LLM token vocabulary and set of CFG terminals. 
The analysis efficiently precomputes a lexer-state-dependent mapping between sequences of CFG tokens and individual LLM tokens. 
This precomputation allows the decoder to efficiently identify valid LLM tokens at decoding time while avoiding wasteful processing of token combinations that are not \emph{realizable} within the given LLM vocabulary.

%

We implement our approach in a tool, \name, which compares favorably to existing GCD approaches.
\name achieves an average \nx speedup in offline preprocessing compared to related approaches~\cite{ugare2024syncode} while maintaining state-of-the-art online masking efficiency (5-32ms per token).
Furthermore, our evaluation reveals that GCD implementations from existing published papers contain soundness bugs (e.g., they mask tokens that should not be masked or do not terminate when preprocessing simple grammars), whereas our approach lends itself to an elegant implementation consisting of simple modules.

This paper makes two contributions:
\begin{itemize}
\item A flexible (low offline overhead) and efficient (low online token-masking overhead) algorithm for grammar-constrained decoding~(\autoref{sec:offline}-\ref{sec:online}).
\item A tool implementing our algorithm, called \name, that is up to \nx faster than other tools in offline preprocessing while retaining low online token-masking overhead (\autoref{sec:eval}).
\end{itemize}


\section{Approach Overview}
\label{se:overview}

\usetikzlibrary{fit, positioning, shapes.geometric, backgrounds}
\begin{figure*}
\centering
\begin{minipage}[t]{.32\textwidth}
\begin{subfigure}{.9\textwidth}
\centering
\begin{tikzpicture}[
box/.style={draw, rounded corners, minimum width=7cm, font=\sffamily},
]
    \node[box, inner sep=4pt, minimum height=0pt, minimum width=0pt] (terminals) {
    \begin{tabular}{l}
      \texttt{B}: \texttt{ab+}\\
      \texttt{C}: \texttt{ac+}
    \end{tabular}
    };
\end{tikzpicture}
\caption{Terminals and regular expressions.}
\label{fig:overview_terminals}
\end{subfigure}

\begin{subfigure}{.8\textwidth}
\centering
\begin{tikzpicture}[
    shorten >=1pt,
    node distance=2.0cm and 1.3cm,
    on grid,
    auto,
    scale=1.0,
    every state/.style={minimum size=0.8cm}
] 
   \node[state,initial,initial text={}] (q_0)   {$q_0$}; 
   \node[state] (q_1) [right=of q_0] {$q_1$}; 
   \node[state, accepting] (q_2) [right=of q_1, yshift=20pt] {$q^{\texttt{B}}_2$}; 
   \node[state, accepting] (q_3) [right=of q_1, yshift=-20pt] {$q^{\texttt{C}}_3$};

    \path[->] 
    (q_0) edge node {\texttt{a}} (q_1)

    (q_1) edge node[pos=0.5] {\texttt{b}} (q_2) 
          edge node[swap, pos=0.5] {\texttt{c}} (q_3) 

    (q_2) edge [loop right] node[pos=0.15, above] {
            \texttt{b}
            } ()

    (q_3) edge [loop right] node[pos=0.15, above] {\texttt{c}} ()

;
\end{tikzpicture}
\caption{Lexing automaton obtained from terminal definitions in \autoref{fig:overview_terminals}.}
\label{fig:overview_nfa}
\end{subfigure}
\end{minipage}
\begin{minipage}[t]{.48\textwidth}
\begin{subfigure}{.9\textwidth}
\centering
\begin{tikzpicture}[
box/.style={draw, rounded corners, minimum width=7cm, font=\sffamily},
]
    \node[box, inner sep=4pt, minimum height=0pt, minimum width=0pt] {
      \texttt{"a", "b", "c", "ab", "ac", "aba"}
    };
\end{tikzpicture}
\caption{Subword vocabulary for the LLM.}
\label{fig:overview_vocab}
\end{subfigure}

\begin{subfigure}{.9\textwidth}
\centering
\vspace{32pt}
\resizebox{7cm}{!}{
\begin{tikzpicture}[
    tablebox/.style={draw, rounded corners, align=center, font=\sffamily, inner sep=2pt}
]
    \node[tablebox] (lookup) {
      \begin{tabular}{c|c|c|c|c|c|c}
      {} & \texttt{a} & \texttt{b} & \texttt{c} & \texttt{ab} & \texttt{ac} & \texttt{aba} \\ \hline
      $q_0$ &  B,C & $\varnothing$ & $\varnothing$ & B   &  C   &  BB, BC   \\ \hline
      $q_1$ & $\varnothing$ & B & C & $\varnothing$   &  $\varnothing$  &  $\varnothing$   \\ \hline
      $q_2$ &  BB, BC & B & $\varnothing$ & BB   & BC   &  BBB, BBC   \\ \hline
      $q_3$ &  CB, CC & $\varnothing$ & C & CB   &   CC   & CBB, CBC    
      \end{tabular}
    };
\end{tikzpicture}
}
\caption{\Table computed from lexing automaton (\autoref{fig:overview_nfa}) and subword vocabulary (\autoref{fig:overview_vocab}).}
\label{fig:overview_table}
\end{subfigure}
\end{minipage}
\begin{minipage}[t]{.19\textwidth}
\begin{subfigure}{.9\textwidth}
\centering
\begin{tikzpicture}[
box/.style={draw, rounded corners, minimum width=7cm, font=\sffamily},
]
    \node[box, inner sep=4pt, minimum height=0pt, minimum width=0pt] {
        $S := \texttt{BC} \mid \texttt{BC}S$
    };
\end{tikzpicture}
\caption{Grammar $\grammar_{\texttt{BC}}$.}
\label{fig:overview_cfg}
\end{subfigure}

\begin{subfigure}{.9\textwidth}
\centering
\begin{tikzpicture}[
    shorten >=1pt,    
    every state/.style={minimum size=0.8cm},
    node distance=0.5cm and 1cm,
    box/.style={draw, rounded corners, font=\sffamily},
    tablebox/.style={draw, rounded corners, align=center, font=\sffamily},
    labelbox/.style={font=\bfseries\sffamily},
    every node/.style={outer sep=2pt}
]
   \node[state,initial above,initial text={}] (q_0) []  {$S_0$}; 
   \node[state] (q_1) [below=of q_0, xshift=-30pt] {$S_1$}; 
   \node[state,accepting] (q_2) [below=of q_0, xshift=30pt] {$S_2$}; 
    \path[->] 
    (q_0) edge node[above=10pt, pos=0.8] {\texttt{B}} (q_1)
    (q_1) edge node[above] {\texttt{C}} (q_2)
    (q_2) edge node[above=10pt, pos=0.2] {$\epsilon$} (q_0)
;
\end{tikzpicture}    
\caption{Automaton for  the grammar in \autoref{fig:overview_cfg}.}
\label{fig:overview_pda}
\end{subfigure}
\end{minipage}
\caption{Illustrative example of the approach implemented in \name.}
\label{fig:overview}
\end{figure*}
In this section, we give some brief background about parsing (\autoref{sec:lexing-and-parsing}) and grammar-constrained decoding (\autoref{sec:parsing-decoding}), overview the high-level structure of our decoding approach (\autoref{sec:our-approach}), and discuss our improvements over prior work.

\subsection{Lexing and Parsing}
\label{sec:lexing-and-parsing}
While it is possible to build parsers that operate directly on input text, practical parsing tools first perform a preprocessing pass using a lexer for efficiency.
A \textit{lexer} is built from a set of regular expressions which define terminals (e.g. identifiers, keywords, or integer literals). 
\autoref{fig:overview_terminals}
illustrates two regular expressions defining two terminals \texttt{B} (strings consisting of an \texttt{a} followed by a sequence of \texttt{b}'s) and \texttt{C} (strings consisting of an \texttt{a} followed by a sequence of \texttt{c}'s).

The lexer then takes as input a string and tokenizes it so that substrings matching regular expressions for terminals are grouped together.
Here, our terminals consist of an \texttt{a} followed by either a sequence of \texttt{b} or \texttt{c}. 
For instance, the string ``\texttt{abaccab}'' lexes as \texttt{ab acc ab}.
%

The pass by the lexer ensures that the parser can be defined directly over terminals instead of individual characters. 
This separation of concerns avoids mixing character-level processing with higher-level grammar rules, making both components easier to implement and maintain.
In particular, the grammar for the language can be defined at the \textit{terminal} level.
In our example, the grammar shown in \autoref{fig:overview_cfg} accepts sequences of terminals of the form \texttt{BCBCBC\ldots} (where \texttt{B} and \texttt{C} can be any strings matching those terminals).

One way of defining a parser is as an automaton: it takes as input a sequence of terminals and either accepts or rejects the sequence.
To handle all context-free grammars the automaton needs to be a pushdown automaton (PDA) that is equipped with a stack.
In our example, the simple stack-free automaton shown in \autoref{fig:overview_pda} is enough to describe all valid sequences of terminals.

\subsection{Using Parsing for Constrained Decoding}
\label{sec:parsing-decoding}
We recall the general structure of a constrained decoder.
When given a sequence of tokens $t_{1:k}$, \textsc{ConstrainedDecoding} (\autoref{alg:gcd} in Appendix A) uses a checker $C$ during the decoding process to mask out any next token $\token_{k+1}$ that would result in a prefix $\token_{1:k+1}$ for which no possible completion satisfies the constraint.
Specifically, given a prefix $\token_{1:k}$ and vocabulary $\vocab \subseteq \alphabet^+$, a checker $C$ computes a Boolean mask 
$C(\token_{1:k}; \vocab)=m \in \{0, 1\}^{|\vocab|}$, where a value of 1 denotes a viable token (i.e., one for which there exists a sentence completion that can lead to constraint satisfaction), and 0 denotes an invalid one (i.e., one for which no sentence completion can lead to constraint satisfaction). The mask is then applied to the logits \cite{deutsch2019general} produced by the language model to exclude invalid tokens from consideration.

Going back to grammars, a parser either accepts or rejects a string. 
A simple extension is to make this process online---the parser rejects as soon as it consumes a token that will ensure the input string fails to parse. Intuitively, the key idea behind \textit{grammar-constrained decoding} is that the parser primitive for checking if a token will be allowed or not by the parser can be used to build the checker $C$.

However, there is a key problem with using a parser as a checker for constrained decoding: a parser reads  \textit{language} level tokens (which this paper calls \textit{terminals} to avoid confusion), while an LLM outputs tokens according to its subword vocabulary. The LLM tokens may span multiple or only fragments of language terminals, complicating the decoding process. This problem is known as 
\textit{token misalignment} \cite{poesia2022synchromesh}, and is the chief source of complexity for GCD implementations.
For example some of the LLM tokens in \autoref{fig:overview_vocab} correspond to terminals whereas others can span multiple terminals.

\subsection{Our Approach}
\label{sec:our-approach}

\autoref{fig:overview} illustrates the overall structure of our approach, which is based on a data structure we introduce called a \textit{\table}. Intuitively, the \table stores what sequences of terminals could be emitted by the lexer if it reads an LLM token from a given state.

Our GCD approach is split into two parts: an offline preprocessing phase in which the table is constructed and an online phase where the table is then used to generate token masks during decoding.

The offline portion of our algorithm takes two inputs: the LLM vocabulary $\vocab$ (\autoref{fig:overview_vocab}) and the definitions (as regexes) for terminals (\autoref{fig:overview_terminals}).
It then constructs a finite state automaton (FSA) with a set of states $Q$ representing the lexer (\autoref{fig:overview_nfa}).
For example, state $q_2$ represents a state where the lexer has already read the substring \texttt{ab} and can (optionally) read more \texttt{b}'s to emit a \texttt{B} terminal.
We use this automaton together with the LLM's vocabulary to build the \table (\autoref{fig:overview_table}).
The keys of the \table are pairs $(q, v)$ where $q \in Q$ is a state of the lexer automaton and $v \in \vocab$ is an LLM token.
The $(q, v)$ entry of the \table contains the set of sequences of terminals that can eventually be produced if the lexer is in state $q$ and is fed in one additional LLM token $v$. 
For example, given state $q_2$ and LLM token \texttt{aba} we can either produce a sequence of terminals \texttt{BBB} or \texttt{BBC}---this is because we are in a lexer state corresponding to having started a \texttt{B} token, lex a complete \texttt{B} token from \texttt{ab}, and then start an arbitrary new token.

In the online portion, our algorithm uses the \table together with the parser to construct masks stating which LLM tokens are valid at each step. During decoding, the algorithm tracks the state of the lexer (\autoref{fig:overview_nfa}) and the state of the parser PDA (\autoref{fig:overview_pda}) on the prefix the LLM has generated so far. The algorithm then analyzes the state the parser is in to determine the possible sequences of terminal the parser could consume next. Then, the algorithms consult the \table using these sequences and the current state of the lexer to determine what LLM tokens should be masked and kept.

\section{Offline Token Preprocessing}
\label{sec:offline}

%
Our approach starts by preprocessing the lexer to efficiently construct a lookup table that relates LLM tokens to terminal sequences (\autoref{sec:lexing}) and vice versa (\autoref{sec:realizable}). 
The preprocessed lexer is then used to analyze the parser to determine what terminal sequences are actually possible sequences in the grammar (\autoref{sec:parsing}).


\subsection{Lexer Preprocessing }
\label{sec:lexing}

Let $\alphabet$ denote the set of string characters, $\alphabet^\ast$ denote the set of strings over this alphabet, and $\terms$ denote the set of terminals (i.e., grammar-level tokens).
A \emph{lexer} is a function $\lexer$ that given an input string $w \in \alphabet^\ast$, returns a pair $(T_1\ldots T_k, w_r)$, where $T_1 \ldots T_k \in \terms^\ast$ is a sequence of terminals and $w_r \in \alphabet^\ast$ is the suffix of $w$ that has not been lexed (i.e., mapped to language terminals) yet. 

Typically lexers resolve ambiguity by making some simplifying assumptions that also help improve efficiency and avoid backtracking.
We use the same assumptions and describe them next.

\paragraph{Maximal Munch Principle and Lookahead}

Consider a language that contains two different tokens for the increment operator \texttt{++} and the addition operator \text{+}. 
Although the input string \texttt{++} could be tokenized as two separate \texttt{+} addition tokens, in practice lexers prioritize the longer valid token to resolve ambiguity (and which usually captures the intended syntax of the programming language).
This behavior is called the \emph{maximal munch principle}: the lexer matches the longest possible substring starting at the current position that aligns with a defined token pattern.

Under a strict interpretation of the maximal munch principle, if the lexer reaches the end of the input stream while processing a partial triple-quoted Python string \texttt{"""a"}, the lexer should tokenize the input as two strings \texttt{""} and \texttt{"a"}.
However, supporting such cases would require either waiting until the end of the input string to produce any tokens or allowing backtracking.
As such, in practice many lexers (including Python's) will raise an error and stop lexing if the scanned prefix cannot be tokenized as a single terminal.
%
This greedy behavior disallows some strings, but guarantees that the lexer can resolve all tokenization ambiguities by inspecting only the next character at each step.

\begin{definition}[1-lookahead]
A lexer $\lexer$ is \emph{1-lookahead} if for every string $w \in \alphabet^\ast$ and valid continuation $c \in \alphabet$ of $\sent$, whenever $\lexer(\sent) = (\term_1 \ldots \term_k, r)$ then $\lexer(\sent c)$ is either $(\term_1 \ldots \term_k \term_{k+1}, c)$ for some $T_{k+1} \in \terms$ or $(\term_1 \ldots \term_k, r c)$.
\end{definition}

Terminals are specified as a set of regular expressions. It is oftentimes convenient to work with a \textit{lexing automaton}, which is the finite state automaton (FSA) that accepts strings matching any terminal definition \cite{mcnaughton1960regular}.
We refer the reader to \autoref{app:fsa-definition} for formal definitions of the semantics of FSA, but recall that an FSA is a tuple $\automaton = (\Sigma, Q, q_0, \delta, F)$ where $\Sigma$ is the alphabet, $Q$ is the set of states with initial state $q_0\in Q$, $\delta$ contains transitions of the form $q \xrightarrow{c} q'$, and $F\subseteq Q$ is the set of final states.


\paragraph{Lexing Transducer}

\begin{figure}
\centering
{\footnotesize
\begin{tikzpicture}[
    shorten >=1pt,
    node distance=2.5cm and 2.5cm,
    on grid,
    auto,
    scale=1.0,
    every state/.style={minimum size=0.8cm}
] 
   \node[state,initial,accepting,initial text={}] (q_0)   {$q_0$}; 
   \node[state] (q_1) [right=of q_0] {$q_1$}; 
   \node[state] (q_2) [right=of q_1, yshift=30pt] {$q^{\texttt{B}}_2$}; 
   \node[state] (q_3) [right=of q_1, yshift=-30pt] {$q^{\texttt{C}}_3$};

    \path[->] 
    (q_0) edge node {\texttt{a}:$\epsilon$} (q_1)
          edge [loop above] node[above] {\texttt{EOS}:\texttt{\$}} ()

    (q_1) edge [bend right=10] node[swap, pos=0.5] {\texttt{b}:$\epsilon$} (q_2) 
          edge [bend left=10] node[pos=0.5] {\texttt{c}:$\epsilon$} (q_3) 

    (q_2) edge [loop right] node[pos=0.15, above] {
            \texttt{b}:$\epsilon$
            } ()
          edge[bend right=10] node[swap, above=2pt] {
            \texttt{a}:\texttt{B}
          } (q_1)
          edge[bend right=20] node[swap, above=2pt] {
            \texttt{EOS}:\texttt{B\$}
          } (q_0)

    (q_3) edge [loop right] node[pos=0.15, above] {\texttt{c}:$\epsilon$} ()
          edge[bend left=10] node[swap, below=2pt] {\texttt{a}:\texttt{C}} (q_1)
          edge[bend left=20] node[swap, below=2pt] {
            \texttt{EOS}:\texttt{C\$}
          } (q_0)

;
\end{tikzpicture}
}
\caption{A lexing transducer $\transducer_\automaton$ derived from FSA $\automaton$ in \autoref{fig:overview}. }
\label{fig:char-transducer}
\end{figure}

A 1-lookahead maximal munch lexer can be defined from a lexing automaton as follows:
The input is processed character-by-character by transitioning through the FSA's states.
When no valid transition exists for the next character $c$, the lexer checks whether the current state corresponds to a valid language token. 
If it does and the tokenizer has at this point produced a pair $(T_1\ldots T_k, w_r)$, the not-yet tokenized string $w_r$ is tokenized with token $T_{k+1}$ corresponding to the reached state, the FSA is reset to its initial state $q_0$ (and the tokenizer state $(T_1\ldots T_k T_{k+1}, \varepsilon)$ with the empty string $\varepsilon$), and the character $c$ is consumed as the starting character of a new token $q_0$. 
If the current state does not correspond to a valid token or if $c$ cannot be consumed at $q_0$, then $c$ is invalid. 
Invalid characters inform what tokens should be masked during constrained decoding.

%

This process can be formalized as a finite-state transducer (FST), an extension of a finite-state automaton that can produce output terminals when reading characters. Given the original lexing automaton $\automaton$ representing valid tokens, we write $\transducer_\automaton$ to denote the \emph{lexing transducer}, the FST corresponding to $\automaton$.
The construction of $\transducer_\automaton$ from $\automaton$ is formalized in \autoref{alg:fst-construction} in Appendix A, but at a high level the process simply adds transitions to handle cases where no valid transition exists for the current character $c$, outputting terminals and exiting final states. 
\autoref{fig:char-transducer} shows the lexing transducer derived from the FSA in \autoref{fig:overview}.




\paragraph{LLM Token to Terminals}

\begin{figure}
\centering
{\footnotesize
\begin{tikzpicture}[
    shorten >=1pt,
    node distance=2.5cm and 3.0cm,
    on grid,
    auto,
    scale=1.0,
    every state/.style={minimum size=0.8cm}
] 
   \node[state,initial,accepting,initial text={}] (q_0)   {$q_\epsilon$}; 
   \node[state] (q_1) [right=of q_0] {$q_{\texttt{a}}$}; 
   \node[state] (q_2) [right=of q_1] {$q_{\texttt{ab}}$}; 
    \path[->] 
    (q_0) edge [bend left=10] node {$\epsilon$:\texttt{a}} (q_1)
          edge [loop above] node {\texttt{a}:\texttt{a}, \texttt{b}:\texttt{b}, \texttt{c}:\texttt{c}} ()
    (q_1) edge [bend left=10] node {\texttt{ab}:\texttt{b}, \texttt{ac}:\texttt{c}} (q_0)
          edge node {$\epsilon$:\texttt{b}} (q_2) 
    (q_2) edge [bend right=40] node[pos=0.4, swap] {\texttt{aba}:\texttt{a}} (q_0)
;
\end{tikzpicture}
}
\caption{Detokenizing transducer for vocabulary $\vocab=\{\texttt{a}, \texttt{b}, \texttt{c}, \texttt{ab}, \texttt{ac}, \texttt{aba}\}$.}
\label{fig:detokenizing}
\end{figure}

\begin{figure}
\centering
{\footnotesize
\begin{tikzpicture}[
    shorten >=1pt,
    node distance=2.5cm and 2.7cm,
    on grid,
    auto,
    scale=1.0,
    every state/.style={minimum size=0.8cm}
] 
   \node[state,initial,accepting,initial text={}] (q_0)   {$q_0$}; 
   \node[state] (q_1) [right=of q_0] {$q_1$}; 
   \node[state] (q_2) [right=of q_1, yshift=40pt] {$q^{\texttt{B}}_2$}; 
   \node[state] (q_3) [right=of q_1, yshift=-40pt] {$q^{\texttt{C}}_3$};
    \path[->] 
    (q_0) edge node {\texttt{a}:$\epsilon$, \texttt{aba}:\texttt{B}} (q_1)
          edge [bend left=40]  node[pos=0.4, swap] {\texttt{ab}:$\epsilon$} (q_2)
          edge [bend right=40]  node[pos=0.4] {\texttt{ac}:$\epsilon$} (q_3)
          edge [loop above] node[above] {\texttt{EOS}:\texttt{\$}} ()
    (q_1) edge [bend right=10] node[swap, pos=0.3] {\texttt{b}:$\epsilon$} (q_2) 
          edge [bend left=10] node[pos=0.3] {\texttt{c}:$\epsilon$} (q_3) 
    (q_2) edge [loop right] node[pos=0.15, above] {
            \begin{tabular}{c} \texttt{b}:$\epsilon$, \\ \texttt{ab}:\texttt{B} \end{tabular}
            } ()
          edge[bend right=10] node[swap, above=2pt] {
            \begin{tabular}{c} \texttt{a}:\texttt{B}, \\ \texttt{aba}:\texttt{BB} \end{tabular}
          } (q_1)
          edge [bend right=10] node[swap, pos=0.4] {\texttt{ac}:\texttt{B}} (q_3)
          edge[bend right=50] node[swap, above=2pt] {
            \texttt{EOS}:\texttt{B\$}
          } (q_0)
    (q_3) edge [loop right] node[pos=0.15, above] {
            \begin{tabular}{c} \texttt{c}:$\epsilon$, \\ \texttt{ac}:\texttt{C} \end{tabular}
            } ()
          edge [bend right=10] node[swap, pos=0.6] {\texttt{ab}:\texttt{C}} (q_2)
          edge[bend left=10] node[swap, below=2pt] {
            \begin{tabular}{c} \texttt{a}:\texttt{C}, \\ \texttt{aba}:\texttt{CB} \end{tabular}
          } (q_1)
        edge[bend left=50] node[swap, below=2pt] {
            \texttt{EOS}:\texttt{C\$}
          } (q_0)
;
\end{tikzpicture}
}
\caption{A determinized token-level lexing transducer $\transducer_{\automaton \circ \vocab}$, which is formed by composing $\transducer_\vocab$ from \autoref{fig:detokenizing} and $\transducer_\automaton$ from \autoref{fig:char-transducer}.}
\label{fig:token-transducer}
\end{figure}

Processing LLM tokens character-by-character with the lexing transducer at runtime would incur significant overhead.
To address this problem, we instead construct a token-to-terminal transducer by composing the character-to-terminal lexing transducer with the detokenizing transducer introduced by \citet{koo2024automatabased}.
A detokenizing transducer simply maps LLM tokens to their corresponding sequence of text characters (i.e., the input alphabet is $\vocab$ and the output alphabet is $\Sigma$).
A detokenizing transducer is nondeterministic and can contain $\varepsilon$-transitions that produce outputs without consuming inputs.

\autoref{fig:detokenizing} illustrates the detokenizing transducer $\transducer_\vocab$ derived from the vocabulary $\vocab = \{\texttt{a}, \texttt{b}, \texttt{c}, \texttt{ab}, \texttt{ac}, \texttt{aba}\}$ in \autoref{fig:overview}.
Note that as an optimization common prefixes of tokens form a trie-like structure (e.g., state $q_\texttt{a}$ denotes the state reached when producing the first \texttt{a} for all tokens that start with that character), reducing computational overhead by reusing shared prefix computation.
\autoref{fig:token-transducer} depicts the combined token-level lexing transducer $\transducer_{\automaton \circ \vocab}=\transducer_\automaton \circ \transducer_{\vocab}$, which is the determinized FST capturing the sequential functional composition of the two transducers---i.e., the output of $\transducer_\vocab$ is fed as input to $\transducer_\automaton$.
This token-to-terminal transducer enables efficient lookup of valid next tokens and produced terminal symbols in each state.

\subsection{Realizable Terminal Sequences}
\label{sec:realizable}

Now that we have defined the formal machinery behind lexing, we are ready to explain how LLM tokens can be mapped to possible sequences of terminals.

When the transducer $\transducer_{\automaton \circ \vocab}$ in \autoref{fig:token-transducer} consumes the LLM token \texttt{aba} from the initial state $q_0$, it produces the terminal \texttt{B} and moves to state $q_1$. 
If we inspect the grammar $\grammar_{\texttt{BC}}$ in \autoref{fig:overview_cfg}, we can deduce that the parser, which receives as input sequences of tokens, expects/requires the next language token to be \texttt{C}.
Since $\transducer_{\automaton \circ \vocab}$ in state $q_1$ does not immediately produce any output when consuming \texttt{b}, 
the generated terminal sequence so far (i.e., \texttt{B}) is still a valid prefix according to the grammar $\grammar_{\texttt{BC}}$.
However, after transitioning to $q_2$, no possible path can generate \texttt{C} next.

As illustrated by the above example, for each transition $q \xrightarrow{t:T_1 \ldots T_k} q'$ in the lexing transducer $\transducer_{\automaton \circ \vocab}$, we should check whether there is a terminal $T$ such that \rone  $T$ can be generated along some path from $q'$, and \rtwo $T_1 \ldots T_k T$ is accepted by the grammar.
This observation leads to the following definition, which describes which terminal sequences need to be considered by the parser.

\begin{definition}[Producible Terminals]
Given a state $q$, the set of producible terminals $\producible(q)$ is defined as the set 
$$\{T_1 \in \terms \mid q \xrightarrow{w:T_1 \ldots T_k}^\ast q' \in \delta^\ast \textrm{ for some } q'\in Q, w \in \alphabet^\ast \}$$
\end{definition}

\begin{definition}[Realizable Terminal Sequences]
\label{def:realizable-term-seq}
Given a token vocabulary $\vocab$ and FSA $\automaton$, let $\delta$ be the set of transitions in the token-level lexing transducer $\transducer_{\automaton \circ \vocab}$.
The set of \textit{realizable terminal sequences} $\realizable{\vocab}{\automaton}$ is defined as 
\begin{multline*}
\realizable{\vocab}{\automaton} = 
\{ T_1 \ldots T_k T \mid \\ 
\exists q, q', t.\, q \xrightarrow{t:T_1 \ldots T_k} q' \in \delta \textrm{ and } T \in \producible(q') \}.
\end{multline*}
\end{definition}

Note that the LLM vocabulary $\vocab$ contains all characters in $\alphabet$, ensuring that $\alphabet^\ast = \vocab^\ast$. 
Therefore, any next terminal producible in $\transducer_\automaton$ is also producible in the combined transducer $\transducer_{\automaton \circ \vocab}$ and vice versa. 
This equivalence allows us to simplify producibility checks: instead of analyzing the large combined transducer $\transducer_{\automaton \circ \vocab}$, we need only compute reachability to accepting states within $\transducer_\automaton$ to determine producible next terminals.

\paragraph{Inverse Token Spanner Table}
\setcounter{algorithm}{3}
\begin{algorithm}[!t]
\caption{\tname{BuildInverseTokenSpannerTable}}
\label{alg:inverse-token-table}
\begin{algorithmic}
    \STATE {\bfseries Input:} Combined FST $\transducer_{\automaton \circ \vocab} = (\vocab, \terms, Q, q_0, \delta, F)$
    \STATE {\bfseries Output:} Realizable token sequences $\realizable{\vocab}{\automaton}$, \\
    Inverse lookup table $\inversetable: \realizable{\vocab}{\automaton} \times Q \rightarrow 2^\vocab $
    \STATE $\realizable{\vocab}{\automaton} := \emptyset$, $\inversetable(\cdot, \cdot) := \emptyset$
    \FOR{$q \xrightarrow{t:T_1 \ldots T_k} q' \in \delta$}
        \FOR{$T$ recognized at $q''$, that is reachable from $q'$}
            \STATE $\realizable{\vocab}{\automaton} := \realizable{\vocab}{\automaton} \cup \{ T_1 \ldots T_k T \}$
            \STATE $\inversetable(q, T_1 \ldots T_k T) := \inversetable(q, T_1 \ldots T_k T) \cup \{t\}$
        \ENDFOR
    \ENDFOR
    \RETURN $\realizable{\vocab}{\automaton}$, $\inversetable$
\end{algorithmic}
\end{algorithm}

\autoref{alg:inverse-token-table} computes the set of realizable terminal sequences and constructs the key data structure we use to perform constrained decoding.
\begin{definition}[Inverse Token Spanner Table]
Given a lexer state $q \in Q_{\automaton \circ \vocab}$ and $T_1 \ldots T_k T \in \realizable{\vocab}{\automaton}$, the entry $\inversetable(q, T_1 \ldots T_k T)$ in the \textit{inverse token spanner table} $\inversetable$ is the set of tokens that generates $T_1 \ldots T_kT$ from state $q$. Formally,
\begin{multline*}
\inversetable(q, T_1 \ldots T_k T) = \\ \{ t \mid  q \xrightarrow{t:T_1 \ldots T_k} q' \in \delta \textrm{ and }
T \in \producible(q') \}
\end{multline*}    
\end{definition}

Note that, for a given state $q'$, the set of producible terminals $T$ in the lexing transducer can be precomputed using reachability algorithm \cite{floydwarshall}.
This table allows a decoder to determine which LLM tokens can result in a given realizable terminal sequence $T_1 \ldots T_k T$ being produced from a specific lexer state.

Going back to the example in \autoref{fig:overview}, the table in \autoref{fig:overview_table} illustrates the terminal sequences produced by the lexing transducer for each pair of state and token. The set of all realizable terminal sequences, $\realizable{\vocab}{\automaton}$, is obtained by taking the union of all entries in this table.

Building on  \autoref{fig:overview_table}, we can also derive an inverse token spanner table, $\inversetable$.
For a given state $q$ and terminal sequences $\alpha$, the table $\inversetable$ provides the set of tokens that can generate the sequence $\alpha$ from the state $q$.
For example, the terminal sequence \texttt{BC} is generated by the token \texttt{aba} in state $q_0$ (i.e., $\texttt{aba} \in \inversetable(q_0, \texttt{BC})$). The same sequence is generated by the token \texttt{ac} in state $q_2$ (i.e., $\texttt{ac} \in \inversetable(q_2, \texttt{BC})$).




\subsection{Parser Preprocessing}
\label{sec:parsing}

We do not formally define context-free grammars (CFG) for brevity and refer the reader to \autoref{app:cfg-definition} for details.
For the sake of this paper, the reader only needs to know that a CFG parser is typically formalized as a pushdown automaton (PDA), an extension of FSA with an execution stack \cite{schutzenberger1963context}.
The definition of a PDA is similar to that of an FSA, but transitions are also allowed to manipulate a stack over symbols $\Pi$ via push and pop operations.
Therefore, in a PDA, a configuration after reading a sequence of input characters is a pair of automaton state $q\in Q$ and execution stack $\gamma\in \Pi^\ast$.
 
We refer the reader to \autoref{app:pda-definition} for the formal definition of a PDA, but informally, each PDA transition $q \xrightarrow{c[\beta \rightarrow \beta']} q'$ can only be activated if the character being read is a $c$ and the top of the current stack configuration $\gamma$ matches to sequence of stack symbols $\beta \in \Pi^\ast$. 
If the transitions is activated, the current state becomes $q'$, and the top $\beta$ elements of the stack are replaced with new stack elements $\beta' \in \Pi^\ast$.

As with lexer preprocessing, our goal is to also preprocess the parser to avoid iterating over every terminal sequence generated by the lexer at runtime.
The challenge with pushdown automata (PDAs), compared to lexers (NFAs), is their requirement for a stack to parse context-free grammars.
This reliance on the stack prevents us from fully determining at preprocessing time whether a given terminal sequence is acceptable from a particular state, as the stack's contents must be examined during execution. 
However, we can still identify many terminal sequences that will always be accepted or always rejected, independent of the current stack.

To compute which terminal sequences will always be accepted or always rejected, we directly compose the detokenizing transducer $\realizable{\vocab}{\automaton}$ (\Cref{def:realizable-term-seq}), with the PDA $\pushdown$ produced by a parser generator (and where transitions operate over single terminals)~\cite{allauzen2012pushdown} to obtain a new pushdown automaton $\pushdown \circ \transducer_{\realizable{\vocab}{\automaton}}$ where transitions operate over terminal sequences. 

This last pushdown automaton can efficiently determine valid sequences of terminal symbols from each parser state.
We note that preprocessing both the lexer and parser in tandem is a key feature that distinguishes our work from prior work~\cite{beurer2024domino, ugare2024syncode}.

%




\textit{Stack invariance} provides a sufficient condition for knowing when a sequence of terminals is accepted.

\begin{proposition}[Stack Invariance]
If a PDA $\pushdown$ accepts an input sequence $w$ in state $q$ with stack configuration $\gamma$, 
then $w$ is also accepted in the same state $q$ when the stack configuration is $\gamma' \cdot \gamma$ for some $\gamma'$ 
(i.e., when $\gamma$ appears at the top of the stack with additional symbols beneath it).
\end{proposition}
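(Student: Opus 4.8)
The plan is to prove the statement by replaying the accepting computation of $\pushdown$ transition by transition, exploiting the single structural fact that every PDA transition $q \xrightarrow{c[\beta \rightarrow \beta']} q'$ inspects and modifies only the \emph{top} of the stack and never the symbols beneath the matched segment $\beta$. First I would fix the acceptance convention: since a configuration is a pair $(q,\gamma)$ and $\pushdown$ has a designated set of final states, I take ``accepts $w$ in state $q$'' to mean that reading all of $w$ from $(q,\gamma)$ reaches a final state. This is essential: under acceptance-by-empty-stack the claim would be false, since augmenting the stack by $\gamma'$ leaves $\gamma'$ behind rather than emptying it. I also fix the convention that the top of the stack is written on the right, so that $\gamma' \cdot \gamma$ denotes the stack with $\gamma$ on top and $\gamma'$ sitting untouched beneath, matching the informal description in the statement.

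The heart of the argument is a \emph{stack-extension lemma}, proved by induction on the number of transitions in a run: if $\pushdown$ has a valid run $(q,\sigma) \vdash^{*} (q',\sigma'')$ consuming some input, then for every $\gamma'$ there is a valid run $(q,\gamma'\cdot\sigma) \vdash^{*} (q',\gamma'\cdot\sigma'')$ consuming the same input and visiting the same sequence of control states. The base case of zero transitions is immediate. For the inductive step I would consider the first transition $q \xrightarrow{c[\beta \rightarrow \beta']} q_1$ of the original run; its validity means the current original stack factors as $\eta\cdot\beta$, and the transition rewrites it to $\eta\cdot\beta'$. In the augmented run the current stack is $\gamma'\cdot\eta\cdot\beta$, whose top is still exactly $\beta$, so the \emph{same} transition is enabled and rewrites it to $\gamma'\cdot\eta\cdot\beta'$, re-establishing the invariant that the augmented stack equals $\gamma'$ followed by the original stack. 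Applying the induction hypothesis to the remainder of the run completes the step.

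With the lemma in hand the proposition follows immediately: instantiate it with $\sigma=\gamma$ on the given accepting run of $w$ from $(q,\gamma)$. The resulting run from $(q,\gamma'\cdot\gamma)$ consumes the same $w$ and terminates in the \emph{same} control state, which is final by hypothesis; hence $w$ is accepted from $(q,\gamma'\cdot\gamma)$ as well.

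I expect the main obstacle to be making rigorous the informal claim that the original run ``never touches'' $\gamma'$. The subtle point is that the original stack may shrink during the computation, possibly below the length of $\gamma$ or even to empty, so I must argue that no transition of a \emph{valid} original run can require a top longer than the stack currently present. This holds precisely because the run is valid over the unaugmented stack $\gamma$ to begin with, so each matched top $\beta$ is genuinely available without any help from $\gamma'$; the inductive invariant then guarantees that $\gamma'$ remains a strict bottom segment at every configuration and is therefore irrelevant to which transitions fire. A little extra care is needed for the pure-push case $\beta=\varepsilon$, which matches any stack, but this is handled uniformly by the same invariant.
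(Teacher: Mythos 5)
Your proof is correct, and in fact the paper offers no proof of this proposition at all: it is stated in the parser-preprocessing section as a standard fact about pushdown automata, so your stack-extension lemma (induction on the length of the run, with the invariant that $\gamma'$ remains an untouched bottom segment of the augmented stack) supplies exactly the argument the authors left implicit. The one point where you diverge from the paper is the acceptance convention. You stipulate that ``accepts $w$ in state $q$'' means reaching a final state, but the paper explicitly adopts a broader notion (Appendix B.1): a string is \emph{accepted} in state $q$ if some valid run from $q$ consumes it, with no final-state requirement; the PDA analogue is simply that a run from configuration $(q,\gamma)$ consuming $w$ exists. Under that convention your lemma is not merely the main ingredient but the entire statement, and your closing step about preserving final states becomes unnecessary---though harmless, since your replayed run visits the same sequence of control states and therefore preserves final-state acceptance as well. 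Your observation that the claim would fail under acceptance-by-empty-stack is a good sanity check, but the concern does not arise here, both because of the paper's convention and because, in the transition format the paper uses ($q \xrightarrow{c[\beta \rightarrow \beta']} q'$ with $\beta \in \Pi^\ast$ matched against the top of the stack), no transition can test for stack emptiness: matching $\beta = \varepsilon$ succeeds on any stack, which is precisely why your invariant that $\gamma'$ is never inspected goes through uniformly.
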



It follows that a terminal sequence accepted by the PDA starting with an empty stack configuration is accepted under any stack configuration. 

The following proposition shows how to construct a stack-free finite-state automaton that overapproximates the set of sequences accepted by a PDA.
\begin{proposition}[Overapproximation via FSA]
\label{prop:overapprox-fsa}
Consider an FSA $\automaton_\pushdown$ obtained by removing all stack operations from a PDA $\pushdown$.
If an input sequence $w$ is not accepted by $\automaton_\pushdown$ in state $q$, then $w$ cannot be accepted by $\pushdown$ in state $q$ with any stack configuration.
\end{proposition}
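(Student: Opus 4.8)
The plan is to prove the contrapositive of the statement, which is cleaner to reason about. Concretely, I would show: if an input sequence $w$ \emph{is} accepted by the PDA $\pushdown$ in state $q$ under \emph{some} stack configuration $\gamma$, then $w$ is accepted by the stack-free FSA $\automaton_\pushdown$ in state $q$. The original claim follows immediately by contraposition. The key observation is that $\automaton_\pushdown$ is obtained from $\pushdown$ simply by erasing the stack-manipulation annotations on each transition: every PDA transition $q \xrightarrow{c[\beta \to \beta']} q'$ becomes an FSA transition $q \xrightarrow{c} q'$, keeping the same states and input labels.

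First I would set up an accepting run of $\pushdown$ on $w$ from configuration $(q, \gamma)$: this is a finite sequence of configurations $(q_0, \gamma_0) = (q, \gamma), (q_1, \gamma_1), \ldots, (q_n, \gamma_n)$ where the final state $q_n$ is accepting (and the acceptance condition on the stack, if any, is met), and each step $(q_i, \gamma_i) \xrightarrow{c_{i+1}} (q_{i+1}, \gamma_{i+1})$ is licensed by some PDA transition $q_i \xrightarrow{c_{i+1}[\beta_i \to \beta_i']} q_{i+1}$ with $c_1 c_2 \cdots c_n = w$. The core of the proof is then a simple projection argument: I would observe that for each such PDA transition there is, by construction of $\automaton_\pushdown$, a corresponding FSA transition $q_i \xrightarrow{c_{i+1}} q_{i+1}$ that reads the same input symbol and has the same source and target states. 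Stringing these together yields a run $q_0 \to q_1 \to \cdots \to q_n$ of $\automaton_\pushdown$ on the same input $w$, starting in $q$ and ending in the accepting state $q_n$, hence $w$ is accepted by $\automaton_\pushdown$ in state $q$.

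The argument is essentially a structural induction on the length of the run (or equivalently on $|w|$), with the inductive step supplied by the one-to-one correspondence between PDA transitions and their stack-erased FSA images. I would make explicit that this correspondence preserves exactly the information the FSA tracks---current state and input label---while discarding only the stack side-conditions and effects, so that \emph{every} legal PDA step projects to a legal FSA step regardless of what the stack was doing.

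The main subtlety, rather than a genuine obstacle, is aligning the acceptance conditions: I must ensure that whenever the PDA's accepting condition on $w$ is satisfied (reaching a final state, and depending on the PDA model possibly also emptying or matching the stack), the corresponding FSA run likewise ends in a state deemed accepting by $\automaton_\pushdown$. Since $\automaton_\pushdown$ inherits $\pushdown$'s set of accepting states and drops any stack-based acceptance requirement, the FSA's acceptance condition is a relaxation of the PDA's, so it can only accept \emph{more} sequences---precisely the overapproximation we want. I would therefore note that $\automaton_\pushdown$ accepts a superset of the language $\pushdown$ accepts from $q$ over all stacks, which is exactly the overapproximation property asserted by the proposition.
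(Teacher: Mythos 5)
Your proof is correct. Note, however, that the paper states this proposition without giving any proof at all---it is treated as an immediate consequence of the construction---so there is no authorial argument to diverge from; your contrapositive-plus-projection argument (every legal PDA step $q_i \xrightarrow{c_{i+1}[\beta_i \to \beta_i']} q_{i+1}$ projects, by construction of $\automaton_\pushdown$, to a legal FSA step $q_i \xrightarrow{c_{i+1}} q_{i+1}$, so accepting runs project to accepting runs) is exactly the standard justification one would supply. One simplification worth making: the paper uses a broader notion of acceptance (Appendix B.1), under which a string $w$ is ``accepted in state $q$'' merely when some run consuming $w$ exists from $q$, with no final-state or stack-based condition. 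Under that convention, the alignment of acceptance conditions that you single out as the main subtlety dissolves entirely---the projection of runs is the whole proof---though your more careful treatment does no harm and would be needed under the textbook definition of PDA acceptance. Your argument also silently handles $\epsilon$-moves correctly, since the paper's FSA definition admits $\epsilon$-transitions, so the projection remains well defined when some $c_i = \epsilon$.
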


Following Proposition~\ref{prop:overapprox-fsa}, a terminal sequence is always rejected if it is rejected by the FSA obtained by removing all stack operations.

The above reasoning can be formalized by computing the set of \textit{always-accepted tokens} $A$ and of \textit{context-dependent terminal sequences} $D$.
Given a lexer state $q^\automaton$ and a parser state $q^\pushdown$, we denote by $A(q^\automaton, q^\pushdown)$ the set of tokens that are accepted regardless of the stack configuration $\gamma$, and by $D(q^\automaton, q^\pushdown)$ the set of terminal sequences that may be accepted by some configuration $\gamma$. 

\autoref{alg:preprocess-parser} in Appendix A describes how to preprocess a parser to build a table of always-accepted tokens $A$ and context-dependent sequences $D$.

\setcounter{algorithm}{5}



Note that one key source of efficiency resulting from our approach is that many transitions in the combined transducer $\transducer_{\automaton \circ 
 \vocab}$ produce the same terminal sequence $T_1 \ldots T_k T$, making $|\realizable{\vocab}{\automaton}|$ smaller than $|\vocab|$ or $|\delta|$.
%
Thus, the set of realizable terminal sequences $\realizable{\vocab}{\automaton}$ enables efficient precomputation of what sequences of terminals the parser should consider.
\section{Online Token Masking}
\label{sec:online}

With a preprocessed inverse token spanner table $\inversetable$, along with the table of always-accepted tokens $A$ and context-dependent sequences $D$, we are now ready to describe the online part of our grammar-constrained decoder (\autoref{alg:token-mask}).

At each decoding step, \autoref{alg:token-mask} analyzes the lexer state $q^\automaton$, parser state $q^\pushdown$, and the current stack configuration $\gamma$ to produce the set of exactly all LLM tokens $\vocab_{\textit{allowed}}$ that  can lead to a sequence accepted by the the grammar.

%
Although we construct the combined PDA $\pushdown \circ \transducer_{\realizable{\vocab}{\automaton}}$ for preprocessing what terminal sequences a token can result in, using the same automaton for checking at decoding time whether LLM tokens can lead to valid parsing sequences would be inefficient. 
PDAs cannot always be determinized and one would have to compute reachability for nodes in 
$\pushdown \circ \transducer_{\realizable{\vocab}{\automaton}}$ under different stack configurations to determine valid language token sequences. 
This computation would effectively involve testing all sequences in $\realizable{\vocab}{\automaton}$ at decoding time, making the preprocessing of the parser meaningless.
Therefore, \autoref{alg:token-mask} uses the set of always-accepted tokens $A$ whenever possible to avoid traversing the PDA and only analyzes what terminal the PDA can accept at decoding time for context-dependent sequences described by $D$.
The set of next legal tokens is then obtained by looking up, for each sequence of terminals accepted by the grammar, the set of tokens that can lead to that sequence. This is done by consulting the inverse token spanner table $\inversetable$.


\begin{algorithm}[!t]
\caption{\tname{ComputeValidTokenMask}}
\label{alg:token-mask}
\begin{algorithmic}
    \STATE {\bfseries Input:}  PDA $\pushdown$, FSA $\automaton$, Inverse token spanner table $\inversetable$ \\
    Always-accepted token table $A$,\\
    Context-dependent sequence table $D$, \\
    Lexer state $q^\automaton$, Parser state $q^\pushdown$, Stack configuration $\gamma$
    \STATE {\bfseries Output:} Set of allowed tokens $\vocab_{\textit{allowed}}$
    \STATE $\vocab_{\textit{allowed}} := A(q^\automaton, q^\pushdown)$
    \FOR{$\alpha \in D(q^\automaton, q^\pushdown)$}
        \IF{$\alpha$ is accepted by $\pushdown$ in state $q^\pushdown$ with stack $\gamma$}
            \STATE $\vocab_{\textit{allowed}} := \vocab_{\textit{allowed}} \,\cup\, \inversetable(q^\automaton, \alpha)$
        \ENDIF
    \ENDFOR
    \RETURN $\vocab_{\textit{allowed}}$
\end{algorithmic}
\end{algorithm}

\section{Experiments}\label{sec:eval}

We implemented our algorithms for computing token masks in a tool called \name. 
A testament to the simplicity of our approach is that \name is implemented in just \textit{900 lines of Python code} built on top of the LALR parser provided by the \tname{Lark} library~\cite{lark}.
Specifically, we used \tname{Lark} to parse regular expressions and context-free grammars, and used \tname{Lark}'s LALR parser to generate a parse table representing a deterministic PDA for parsing sequences of tokens in a given context-free grammar. 


In this section, we evaluate \name by answering the following
two questions:
\begin{itemize}
    \item \textbf{RQ1:} How does the \textit{offline preprocessing overhead} of \name compares to existing GCD approaches?
    \item \textbf{RQ2:} How does the \textit{online per-token decoding overhead} of \name compares to existing GCD approaches?
\end{itemize}
Because all GCD approaches in theory produce the same token masks and only differ in execution time, we do not need to evaluate \name's performance on specific downstream tasks; the effectiveness of GCD has already been evaluated in prior work~\cite{geng2024grammarconstrained,beurer2024domino,gad2024}.

\paragraph{Models and Grammars.}
We conduct our experiments using
three different tokenizers: Llama (32K tokens), Llama-3 (128K), and Qwen-2 (151K).
We consider simplified Go, Java, and Python grammars from prior work on GCD~\cite{ugare2024syncode}.
We choose these grammars for three reasons: they are large grammars (87-99 terminals, 109-145 nonterminals, 353-520 production rules, 7,441-9,319 bytes), they capture real programming languages that are used in existing applications of constrained decoding, and they illustrate well the trade-off between offline preprocessing and online masking times.
While other smaller grammars appear in existing work on GCD, the tools we compare against all take slightly different grammar formats with various restrictions (e.g., no left recursion was allowed in \xgrammar), and we had to manually translate all grammars we considered between these formats to perform our evaluation (a very time-consuming task).

\paragraph{Baselines.}
We compare \name against \outlines~\cite{willard2023efficient},\syncode~\cite{ugare2024syncode},  and \xgrammar~\cite{dong2024xgrammar}, the three state-of-the-art GCD tools that can nominally handle grammars of practical sizes.

\paragraph{Measures.}
For each combination of tokenizer and grammar, we measured the time taken to preprocess the grammar for that tokenizer and the average time taken by each GCD implementation to produce a token at inference time.
To fairly evaluate per-token computation time, we wanted to ensure that all three tools followed the same token selection process and recorded the time required to compute the mask at each step---i.e., the online overhead.
We manually built 5 programs in each grammar and used them to decide what sequence of tokens the decoder was going to produce---i.e., for each example program, the decoder produced each token in the program in order and computed the token masks at each step.
Following this setup, we could exactly measure the same online average per-token overhead across all GCD approaches.


\paragraph{Results.}

\autoref{tab:benchmarks} reports the results.
We do not include \xgrammar in \autoref{tab:benchmarks} because it encountered errors during either preprocessing or decoding  for all the grammars we considered---it failed to compile the Java grammar, and entered an infinite loop or encountered a segmentation fault during decoding for Python and Go grammars. 

%
On average (geomean), \name's \textit{offline preprocessing} is \nx  faster than \syncode, but 30.01x slower than \outlines.

For certain terminal sequences, \syncode incorrectly masked a valid token (we suspect the source of this imprecision is that \syncode only unrolls future terminal sequences up to a fixed bound $k$). 
When considering the remaining data points, \name's \textit{online masking} is on average (geomean) 2.73x faster than \syncode and 550.57x faster than \outlines.

\begin{table*}[!t]
\captionsetup{width=.80\textwidth}
\centering
\caption{Offline and online processing times. \war denotes when \syncode masked incorrect tokens.}
\label{tab:benchmarks}
\setlength{\tabcolsep}{2pt}
\begin{tabular}{ccc|rrr|rrrr} 
\toprule[.1em]
&&&\multicolumn{3}{c|}{Offline preprocessing (\textbf{seconds})}&\multicolumn{3}{c}{Online per-token overhead (\textbf{milliseconds})}\\
Model & $|V|$ & Grammar  &\outlines & \syncode & \name &\outlines & \syncode & \name \\
\midrule[.1em]
\multirow{3}{*}{Llama} & 
\multirow{3}{*}{32,000} &
 Go & 2.51 & 556.10  &   24.76 \hspace{5mm} & 3,213.05 & \war & \textbf{5.08} \hspace{5mm} \\
 & & Java & 2.49 & 579.48 &  33.79 \hspace{5mm} & 3,450.67 & 46.71  &  \textbf{6.29} \hspace{5mm} \\
 & & Python & 4.89 & 768.33 &  35.40 \hspace{5mm} & 3,245.10 & 62.40 &  \textbf{7.11} \hspace{5mm} \\
\cmidrule{1-9}
\multirow{3}{*}{Llama-3} & 
\multirow{3}{*}{128,256} & Go & 2.43  & 2,212.95  &   106.07 \hspace{5mm} & 12,974.43 & \war &  \textbf{21.11} \hspace{5mm} \\
 & & Java & 2.46 & 2,294.26 &  166.87 \hspace{5mm} & 13,725.58 & 49.03 &  \textbf{24.69} \hspace{5mm} \\
 & & Python & 4.86 & 2,789.31 &  170.71 \hspace{5mm} & 13,167.83 & 71.49 & \textbf{30.17} \hspace{5mm} \\
\cmidrule{1-9}
\multirow{3}{*}{Qwen-2} & 
\multirow{3}{*}{151,665} &
 Go & 2.38  & 2,635.00  &  132.27 \hspace{5mm} & 14,756.47 & \war & \textbf{21.32} \hspace{5mm} \\
 & & Java & 2.39 & 2,734.27 &  260.80 \hspace{5mm} & 15,235.25 & 50.32 & \textbf{25.12} \hspace{5mm} \\
 & & Python & 4.81 & 3,268.09 &  155.32 \hspace{5mm} & 15,146.38 & 70.87 & \textbf{32.30} \hspace{5mm} \\
\bottomrule[.1em]
\bottomrule[.1em]
\end{tabular}
\end{table*}

\paragraph{Discussion.}
In summary, \name emerges as the new state-of-the-art GCD approach.
\name is \nx faster at offline preprocessing than \syncode, the only other GCD tool with acceptable online overhead (\textbf{RQ1}), and exhibits the lowest online masking overhead than any other GCD tool (\textbf{RQ2}).

While \outlines has the lowest preprocessing offline overhead, its seconds-per-token online overhead does not meet the needs of most practical applications of LLMs.
\outlines' online overhead is due to the fact that its CFG module verifies the acceptability of each token at runtime.

The improvement in offline pre-processing over \syncode is expected as our new approach targets the inefficiency of \syncode's algorithm for token alignment.
The slight improvement in online token-mask computation over \syncode is likely due to how \name only consults the PDA for context-dependent sequences, while \syncode does so for all terminal sequences up to a bound.

%
While of course we cannot guarantee that \name is free of bugs, the simplicity of our approach makes its implementation easier (just 900 lines of code).

Furthermore, \name's  offline preprocessing times have allowed us to heavily test our implementation without incurring in multi-hour testing times.

\paragraph{Limitations.}

The current implementation of \name works under the lexing assumptions described in \autoref{sec:lexing}: maximal munch principle and 1-lookahead lexing. 
It therefore does not support languages for which lexing requires more than 1-lookahead or instances where the same input sequence must be lexed differently depending on the parsing context.

For example, in Java, the end of the nested generic \texttt{List<List<T>{}>} is erroneously tokenized by a lexer operating under maximal munch as $\texttt{>{}>}$---i.e., the bitwise  right-shift operator. However, in this context the parser instead expects two consecutive \texttt{>} terminals denoting the closure of a generic type. (The Java grammar we used in our evaluation, sourced from the \syncode benchmark, does not support generic types.)

To resolve this type of lexing ambiguity, the lexer must consider the current state of the parser, a task that for arbitrary regular expressions requires the lexer to perform unbounded lookahead.
It is possible to modify our approach to handle practically occuring ambiguities by allowing the lexing transducer to nondeterministically generate both possible tokenizations (e.g., a single \texttt{>{}>} terminal or two separate \texttt{>} terminals), specifically for such cases that do not require arbitrary lookahead, and enabling the parser to select the valid interpretation based on grammatical constraints.

For a lexer requires $k > 1$ lookahead, a $k$-lookahead lexing transducer can be implemented by encoding the state and its $k$-lookahead as a single combined state, similar to an LR($k$) parser \cite{knuth1965translation}. However, this approach can significantly increase the size of the resulting transducer.

While our implementation uses a deterministic PDA as its basis, our algorithm can be modified to handle nondeterminism by tracking multiple active states concurrently. 
This extension is applicable to nondeterministic PDAs without $\epsilon$-transition loops that push stack symbols. 
Handling multiple states involves calculating the possible next tokens for each state individually and then taking the union of these allowed tokens.


\section{Related Work}
\label{sec:related}

There are many different implementations of grammar constrained decoding~\cite{geng2024grammarconstrained, willard2023efficient, guidance, beurer2024domino, ugare2024syncode, dong2024xgrammar, llamacpp}, but we focus our comparison on the ones that can handle realistic grammars (e.g., those of modern programming languages).

\syncode \cite{ugare2024syncode} is the only that can provide low online overhead for the large grammars in our evaluation. \syncode speculatively unrolls future terminal sequences up to a fixed bound (i.e., every terminal sequence in $\terms^k$) and precomputes a table similar to our inverse token spanner table but \textit{for all} sequences. Our evaluation shows that our table can be computed significantly faster since it only contains the set of realizable terminal sequences.

\domino \cite{beurer2024domino} precomputes a tree of terminal sequences similar to our lexing transducer. 
However, at decoding time, \domino must traverse the entire tree because the set of realizable terminal sequences varies \emph{for each} lexer state. 
In contrast, our approach computes a single \emph{global} set of realizable terminal sequences $\realizable{\vocab}{\automaton}$, which we then use to preprocess the parser.  
\domino's implementation is not publicly available, but their paper reports 20s for offline preprocessing and 22\% online overhead for an extremely simplified C grammar with approximately 70 rules when using the smaller Llama tokenizer. 
For the same tokenizer, \name can preprocess \textit{much larger grammars} (353-520 rules) with similar times (25-35s).

\xgrammar \cite{dong2024xgrammar} reduces runtime checks by preprocessing context-independent tokens for character-based nondeterministic PDAs, but mentions the misalignment problem in their related work. 
By preprocessing the set of realizable terminal sequences and the inverse token spanner table, 
we efficiently integrate their context-independent token caching approach with a parser that uses a separate lexer. Furthermore, our overapproximation via FSA can identify more always-rejected sequences compared to their expanded suffix analysis, as our method can also follow rule-reduction edges by treating them as $\epsilon$-transitions.

Existing research on the effectiveness of constrained decoding in structured generation shows mixed results. 
Some studies \cite{geng2024grammarconstrained, beurer2024domino} demonstrate that GCD can improve downstream task performance. However, other recent work \cite{tam-etal-2024-speak, gad2024} highlights that GCD can also negatively impact the quality of generated outputs. Our approach is orthogonal to these methods and can be integrated with techniques proposed by \citet{gad2024} or \citet{banerjee2025cranereasoningconstrainedllm} to mitigate such negative effects.

\paragraph{Comparison with \llguidance \cite{llguidance}}

After we completed this paper, \citet{llguidance} released \llguidance, a fast GCD tool written in Rust that incorporates a memory-optimized tokenizer trie, a derivative-based lazy lexer, and a highly optimized Earley parser. \llguidance shares some similarities with our compositional approach, but it relies on derivative-based parsing instead of automata and transducers constructions.

The evaluation provided on \llguidance's website reports impressive performance (in the order of micro to milliseconds for both offline pre-processing and online per-token overhead) on benchmarks like JSONSchemaBench \cite{geng2025jsonschemabenchrigorousbenchmarkstructured}. However, we could not compare this tool to our approach because \llguidance (v.0.7.27) reports the ``ParserTooComplex'' error when applied to the grammars used in our experiments.
However, we believe that some of the optimizations employed by \llguidance can be also beneficial for \name.





\section{Conclusion}
\label{sec:conclusion}

We present \name, a tool for grammar-constrained decoding that is both flexible (low offline overhead) and efficient (low online overhead).
\name precomputes a succinct and easy-to-generate data structure that only captures terminal sequences that are \textit{realizable} by LLM tokens in a given lexer state.
This data structure also speeds up online masking compared to similar approaches by further reducing the number of inputs the parser has to check during decoding.

\name is built with simple primitives that are already supported by existing parsing libraries and lends itself to an easy implementation consisting of just 800 lines of Python.
This simple implementation is easier to test (we have identified that other tools often crash or produce incorrect outputs) and opens the door to many future possible research directions---e.g., finding ways to leverage the PDA stack to perform more aggressive precomputation or implementing decoding directly on GPUs.

\section*{Acknowledgments}
Supported, in part, by a Microsoft Faculty Fellowship; a UCSD JSOE Scholarship; and NSF under grants CCF-2422214, CCF-2402833, and CCF-2211968. Any opinions, findings, and conclusions or recommendations expressed in this publication are those of the authors, and do not necessarily reflect the views of the sponsoring entities.
Loris D'Antoni holds concurrent appointments as a Professor at the University of California San Diego and as an Amazon Scholar. This paper describes work performed at the University of California San Diego and is not associated with Amazon.

\section*{Impact Statement}

This paper presents work whose goal is to advance the field
of Machine Learning. There are many potential societal
consequences of our work, none which we feel must be
specifically highlighted here.


\bibliography{main}
\bibliographystyle{icml2025}

\newpage
\appendix
\onecolumn
\section{Algorithms} \label{app:cd}
In this section we formalize several algorithms presented in the paper. Alg. \ref{alg:gcd} describes the abstract algorithm for general constrained decoding. \ref{alg:fst-construction} describes how to build the lexing transducer from a lexing automaton given as a FSA. Alg. \ref{alg:detokenize} describes the construction of the detokenizing transducer, which converts sequences of tokens to sequences of the characters they contain. Alg. \ref{alg:preprocess-parser} describes the parser preprocessing and how the always-accepted token table and context-dependent sequence table are built.
\setcounter{algorithm}{0}
\begin{algorithm}
\caption{\tname{ConstrainedDecoding}}
\label{alg:gcd}
\begin{algorithmic}
    \STATE {\bfseries Input:} Model $M$, Checker $C$, Tokenized prompt $x$
    \STATE $\vocab := M.\texttt{vocabulary}$
    \REPEAT
    \STATE $m := C(x; \vocab)$
    \STATE $\mathit{logits} := M(x)$   
    \STATE $\token_{\mathit{next}} := \texttt{sample}(\texttt{apply\_mask}(m, \mathit{logits}))$
    \STATE $x := x.\texttt{append}(\token_{\mathit{next}})$
    \UNTIL{$\token_{\mathit{next}} \neq \texttt{EOS}$}
    \RETURN $x$
\end{algorithmic}
\end{algorithm}

\begin{algorithm}
\caption{\tname{BuildLexingFST}}
\label{alg:fst-construction}
\begin{algorithmic}
    \STATE {\bfseries Input:} FSA $\automaton = (\Sigma, Q, q_0, \delta, F)$, Output alphabet $\Gamma$ 
    \STATE {\bfseries Output:} FST $\transducer_\automaton = (\Sigma, \terms, Q, q_0, \delta_{\textrm{FST}}, F_{\textrm{FST}})$
    \STATE $\delta_{\textrm{FST}} := \{q \xrightarrow{c:\epsilon}q' \mid q \xrightarrow{c} q' \in \delta\}$, $F_{\textrm{FST}} := \{q_0\}$
    \FOR{state $q$ that recognizes language token $\term \in \terms$ }
        \FOR{$(c, q')$ s.t. $\exists q''.\, q \xrightarrow{c} q'' \notin \delta$ and $q_0 \xrightarrow{c} q' \in \delta $}
            \STATE $\delta_{\textrm{FST}} := \delta_{\textrm{FST}} \,\cup\,  \{q \xrightarrow{c:\term} q'\}$
        \ENDFOR
        \STATE $\delta_{\textrm{FST}} := \delta_{\textrm{FST}} \,\cup\, \{ q \xrightarrow{\texttt{EOS}: \term\$} q_0 \} \,\cup\, \{ q_0 \xrightarrow{\texttt{EOS}: \$} q_0 \}$
    \ENDFOR
    \RETURN $\transducer_\automaton = (\Sigma, \Gamma, Q, q_0, \delta_{\textrm{FST}}, F_{\textrm{FST}})$
\end{algorithmic}
\end{algorithm}

\begin{algorithm}
\caption{\tname{BuildDetokenizingFST}}
\label{alg:detokenize}
\begin{algorithmic}
    \STATE {\bfseries Input:} Vocabulary $\vocab \subseteq \alphabet^+$
    \STATE {\bfseries Output:} FST $\transducer_\vocab = (\vocab, \alphabet, Q, q_0, \delta, F)$
    \STATE $Q := \{q_\epsilon\}$, $F := \{q_\epsilon\}$, $q_0 := q_\epsilon$, $\delta := \emptyset$
    \FOR{$c_1 \ldots c_k \in \vocab$}
        \STATE $q_{prev} := q_\epsilon$
        \FOR{$i=1$ to $k-1$}
            \STATE $Q := Q \,\cup\, \{ q_{c_1 \ldots c_i} \}$
            \STATE $\delta := \delta \,\cup\, \{q_{prev} \xrightarrow{\epsilon:c_i} q_{c_1 \ldots c_i}\} $
            \STATE $q_{prev} := q_{c_1 \ldots c_i}$
        \ENDFOR
        \STATE $\delta := \delta \,\cup\, \{q_{prev} \xrightarrow{c_1 \ldots c_k:c_k} q_{\epsilon}\}$
    \ENDFOR
    \RETURN $\transducer_\vocab = (\vocab, \alphabet, Q, q_0, \delta, F)$
\end{algorithmic}
\end{algorithm}
\setcounter{algorithm}{4}

\begin{algorithm}
\caption{\tname{PreprocessParser}}
\label{alg:preprocess-parser}
\begin{algorithmic}
    \STATE {\bfseries Input:} PDA $\pushdown$, Realizable sequences $\realizable{\vocab}{\automaton}$, \\
    FSA $\automaton$, Inverse token spannar table $\inversetable$
    \STATE {\bfseries Output:} Always-accepted token table $A$
    \\ Context-dependent sequence table $D$
   
    \STATE $\automaton_{\pushdown} := \textsc{RemoveStackOperations}(\pushdown)$
    \FOR{$q_\pushdown \in Q_\pushdown$}
        \STATE $\bar{A}(q_\pushdown) := \{\alpha \in \realizable{\vocab}{\automaton} \mid q \textrm{ with stack } \epsilon \textrm{ accepts } \alpha \}$
        \STATE $\bar{R}(q_\pushdown) := \{\alpha \in \realizable{\vocab}{\automaton} \setminus A(q) \mid \automaton_{\pushdown} \textrm{ rejects } \alpha \textrm{ in } q  \}$
        \STATE $\bar{D}(q_\pushdown) := \realizable{\vocab}{\automaton} \setminus A(q) \setminus R(q)$
        \FOR{$q_\automaton \in Q_\automaton$}
            \STATE $A(q_\automaton, q_\pushdown) := \bigcup_{\alpha \in \bar{A}(q)} \inversetable(q_\automaton, \alpha) $
            \STATE $D(q_\automaton, q_\pushdown) := \{\alpha \in \bar{D}(q) \mid \inversetable(q_\automaton, \alpha) \neq \emptyset \} $
        \ENDFOR
    \ENDFOR
    \RETURN $A$, $D$
\end{algorithmic}
\end{algorithm}

\section{Formal Definitions}

\subsection{Finite-State Automata}
\label{app:fsa-definition}

A \emph{finite-state automaton} (FSA) is defined as a tuple $\automaton = (\alphabet, Q, q_0, \delta, F)$ where $\Sigma$ is the input alphabet, $Q$ is the set of states, $q_0 \in Q$ is the initial state, $\delta \subseteq Q \times (\alphabet \,\cup\, \{\epsilon\}) \times Q$ is the set of transitions, and $F \subseteq Q$ is the set of accepting states.
Each transition $(q, c, q')$, also denoted by $q \xrightarrow{c} q'$, indicates that, from state $q$, upon reading the input symbol $c$, the automaton transitions to state $q'$.

Given a transition relation $\delta$ of automaton $\automaton$, the extended transition $\delta^\ast \subseteq Q \times \Sigma \times Q$ is the smallest relation defined by \rone $(q, \epsilon, q) \in \delta^\ast$ and \rtwo $(q, wc, q') \in \delta^\ast$ whenever $(q, w, q'') \in \delta^\ast$ and $(q'', c, q) \in \delta$ for some $q'' \in Q$. We also denote $(q, w, q') \in \delta^\ast$ by $q \xrightarrow{w} q'$.
A string $w \in \alphabet^\ast$ is accepted by automaton $\automaton$ when there exists $q' \in F$ such that $q \xrightarrow{w}^\ast q' \in \delta^\ast$.

However, in this paper, we use the term \emph{accepted} in a broader sense than its standard definition. Specifically, we say that a string $w$ is accepted in state $q$ if there exists $q'$ such that $q \xrightarrow{w} q' \in \delta^\ast$, meaningthat a valid transition for $w$ exists from $q$. 


\subsection{Context-Free Grammar}
\label{app:cfg-definition}

A \emph{context-free grammar} (CFG) $\grammar$ is a tuple $(\terms, \nonterms, \start, \ruleset)$ where
$\terms$ is a finite set of terminal symbols (e.g. constants, variable names, and keywords), $\nonterms$ is a finite set of non-terminal symbols, $\start \in \nonterms$ is a start nonterminal, $\ruleset$ is a set of production rules $A \to \alpha_1, \dots, \alpha_n$ where $A \in \nonterms$ and $\alpha_i \in \nonterms \cup \terms$.


%
Formally, a grammar $\grammar$ defines a \emph{single-step derivation} relation on sequences of symbols $\alpha, \beta, \gamma \in (\nonterms \cup \terms)^*$:
$\alpha \nterm \gamma \step \alpha \beta \gamma$ if $\nterm \to \beta \in \ruleset$.
The reflexive transitive closure of this relation is called \emph{derivation} and written $\manystep$.
A sequence of terminals $\sent$ is a \emph{sentence} if it is derivable from $\start$;
the set of all sentences is called the \emph{language} of the grammar $\grammar$, that is,
$\lang(\grammar) = \{\sent \in \terms^* \mid \start \manystep \sent\}$.

In addition, we define the \emph{prefix language} of $\grammar$ as the set of all prefixes of sentences in $\lang(\grammar)$, that is,
$\langpre(\grammar) = \{\sent \in \terms^* \mid \exists v \in \terms^\ast.\, \sent v \in \lang(\grammar)\}$.

\subsection{Finite State Transducer}
An FST is a tuple $\transducer=(\Sigma, \terms, Q, q_0, \delta, F)$ where all components are analogous to those of a FSA but each FST transition $q \xrightarrow{c:T_1 \ldots T_n} q'$ in $\delta$ denotes that when reading a character $c$ from state $q$, the FST moves to a new state $q'$ and also outputs the sequence $\term_1 \ldots \term_k$ of elements over the output alphabet $\terms$.

\subsection{Pushdown Automata}
\label{app:pda-definition}

A \emph{pushdown automaton} is a tuple $\pushdown = (\alphabet, \Pi, Q, q_0, Z_0, \delta, F)$ where $\alphabet$, $Q$, $q_0$ and $F$ are as in their FSA definitions, $\Pi$ is the stack alphabet, $Z_0 \in \Pi$ is the initial stack symbol, and $\delta \subseteq Q \times (\Sigma \cup \{\epsilon\}) \times \Pi^\ast \times Q \times \Pi^\ast$ is the set of transitions. 
Each transition $(q, c, \alpha, q', \beta)$ specifies that, in state $q$, upon reading the input symbol $c$ and matching the top stack symbols to $\alpha$, the PDA transitions to state $q'$ and replaces $\alpha$ with the sequence $\beta$ on the stack.





\section{Formalizations and Proofs}

\subsection{Lexer}

A \emph{lexer specification} is given as a finite set of pairs, each consisting of an automaton $\automaton^i = (Q^i, \alphabet^i, q_{0}^{i}, \delta^i, F^i)$ and a terminal symbol $T^i \in \terms$.
Given a lexer specification $\{(\automaton^i, T^i)\}_i$, a corresponding (1-lookahead) \emph{partial lexer} is a partial function $\lexer: \Sigma^\ast \pfun \Gamma^\ast \times \Sigma^\ast$ in which defined as following:

\begin{itemize}
\item $\lexer(\epsilon) = (\epsilon, \epsilon)$
\item Given $\lexer(w) = (T_1 \ldots T_k, w_r)$, the value of $\lexer(wc)$ is
    \begin{itemize}
    \item[(1)] $(T_1 \ldots T_k T^j \$, \epsilon)$ if $c = \eos$ and $w_r \in \lang(\automaton^j)$ for some $j$
    \item[(2)] $(T_1 \ldots T_k, w_r c)$ if $c \neq \eos$ and $w_r c \in \langpre(\automaton^i)$ for some $i$
    \item[(3)] $(T_1 \ldots T_k T^j, c)$ if $c \neq \eos$ and $w_r \in \lang(\automaton^j)$ but $w_r c \notin \langpre(\automaton^i)$ for all $i$
    \item[(4)] $\bot$ otherwise 
    \end{itemize}
\end{itemize}


The lexing automaton $\automaton$ is defined as the product automaton of automata $\automaton^i$ specified in the lexer definition $\{(\automaton^i, T^i)\}$.


\begin{lemma}
Let $\transducer_\automaton = (Q, \alphabet, \terms, q_0, \delta, F)$ be a lexing transducer for the lexer specification $\{(\automaton^i, T^i)\}_i$. Assume $\epsilon \notin \lang(\automaton^i)$ for all $i$ and each $\automaton^i$ does not have any dead state. 
Then, for any string $w \in \alphabet^\ast$, 
$\lexer(w) = (T_1 \ldots T_k, w_r)$ if and only if $q_0 \xrightarrow{w:T_1 \ldots T_k}^{\ast} q' \in \delta^\ast$ and $q_0 \xrightarrow{w_r:\epsilon}^\ast q' \in \delta^\ast$ for some $q' \in Q$.
\end{lemma}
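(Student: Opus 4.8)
The plan is to prove both directions simultaneously by induction on $|w|$, strengthening the statement to an invariant that also pins down the role of the remainder $w_r$. Concretely, I would show that $\lexer(w) = (T_1 \ldots T_k, w_r)$ is defined if and only if there is a (unique) state $q'$ with $q_0 \xrightarrow{w:T_1 \ldots T_k}^\ast q'$ in $\transducer_\automaton$, and moreover this $q'$ is exactly the $\automaton$-state reached on $w_r$, witnessed by the empty-output run $q_0 \xrightarrow{w_r:\epsilon}^\ast q'$. Two preliminary observations drive everything. First, among the transitions produced by \tname{BuildLexingFST}, only the copied FSA transitions $q \xrightarrow{c:\epsilon} q'$ carry empty output; every token-emitting and \eos transition outputs a nonempty string. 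Hence an empty-output run $q_0 \xrightarrow{u:\epsilon}^\ast q'$ exists exactly when $q_0 \xrightarrow{u} q'$ holds in $\automaton$, which identifies $q'$ with the automaton state after reading the pending partial token $u = w_r$. Second, since the $\automaton^i$ (and hence their product $\automaton$) have no dead states, a $c$-transition out of an $\automaton$-state $s$ exists if and only if the string read so far followed by $c$ lies in $\langpre(\automaton) = \bigcup_i \langpre(\automaton^i)$; this is what lets me translate the $\langpre$- and $\lang$-side conditions of the lexer clauses into the presence or absence of transducer transitions.

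With these in hand the base case $w = \epsilon$ is immediate, taking $q' = q_0$ with empty output and empty remainder. For the inductive step I read one more character $c \in \alphabet$ from the configuration guaranteed by the induction hypothesis ($\transducer_\automaton$ in state $q'$, pending remainder $w_r$) and match the three non-\eos lexer clauses against the transducer's behaviour. Clause (2), where $w_r c \in \langpre$, corresponds to following the copied transition $q' \xrightarrow{c:\epsilon} q''$, leaving the output unchanged and extending the remainder to $w_r c$. Clause (3), where $q'$ accepts a token $T^j$ but $w_r c \notin \langpre$, corresponds to the added transition $q' \xrightarrow{c:T^j} q''$, which emits $T^j$ and restarts the partial token as $c$. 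Clause (4) corresponds to the case where $q'$ is non-accepting and has no $c$-successor, in which the transducer is likewise stuck, so both sides are undefined. In each defined case I re-establish the invariant, using the first observation to confirm that the new state is the automaton state reached on the new remainder. Because $w \in \alphabet^\ast$ carries no \eos symbol, clause (1) and the \eos transitions never fire during the induction, and the hypothesis $\epsilon \notin \lang(\automaton^i)$ ensures the remainder is never prematurely treated as a completed token.

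The main obstacle is the maximal-munch step, clause (3), where I must argue that the added transducer transition fires exactly when the lexer decides to close off the current token. This hinges on the no-dead-state assumption: the absence of a $c$-transition from $q'$ in $\automaton$ has to coincide with $w_r c \notin \langpre(\automaton)$, since otherwise a ``missing'' transition could still correspond to a live prefix and the two machines would diverge. A secondary subtlety is that the construction only adds $q' \xrightarrow{c:T^j} q''$ when $c$ itself begins a valid token (i.e.\ when $q_0 \xrightarrow{c} q''$ exists); I would therefore carry the auxiliary invariant $w_r \in \langpre(\automaton)$ throughout, so that whenever clause (3) applies the restarted remainder $c$ is genuinely a valid token prefix and a matching transition is present. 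Finally, determinism of $\automaton$ together with determinism of the added transitions yields uniqueness of $q'$ and of the emitted output, which is what justifies phrasing the biconditional with a single witness state.
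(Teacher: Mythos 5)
Your overall route is the same as the paper's: induction on $w$ following the recursive definition of $\lexer$, matching the clauses of the lexer against the transitions produced by \tname{BuildLexingFST}, and invoking the no-dead-state assumption to align undefinedness of $\lexer$ with the transducer getting stuck. Your two preliminary observations --- that empty-output runs of $\transducer_\automaton$ are exactly runs of $\automaton$, so the witness $q'$ is the $\automaton$-state reached on $w_r$, and that absence of dead states makes existence of a $c$-transition equivalent to membership of the extended prefix in $\langpre(\automaton)$ --- are precisely what the paper's terse ``cases (1)--(3) are by construction'' implicitly relies on, so in that respect your write-up is more explicit than the original. Restricting the induction to $c \neq \eos$ because $w \in \alphabet^\ast$ is a reasonable reading, though the paper treats clause (1) and the \eos transitions uniformly with the others.

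The one step that does not go through as stated is your handling of the maximal-munch clause (3). You propose carrying the auxiliary invariant $w_r \in \langpre(\automaton)$ and claim it guarantees that the restarted remainder $c$ is a valid token prefix. It does not: that invariant concerns the \emph{old} remainder, whereas clause (3) needs a fact about the fresh character $c$, and nothing in its hypotheses ($w_r \in \lang(\automaton^j)$ and $w_r c \notin \langpre(\automaton^i)$ for all $i$) implies $c \in \langpre(\automaton)$. If some $c$ starts no token, the lexer as literally defined still returns $(T_1 \ldots T_k T^j, c)$, while \tname{BuildLexingFST} adds $q' \xrightarrow{c:T^j} q''$ only when $q_0 \xrightarrow{c} q''$ exists in $\delta$; the transducer is then stuck and the ``only if'' direction of the biconditional fails at that step. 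The correct repair is not an invariant but a reading of the definitions: clause (3) must be understood as applying only when $c \in \langpre(\automaton^i)$ for some $i$, the situation otherwise falling to clause (4), i.e., the 1-lookahead lexer errors immediately when the lookahead character can start no token. (You identified a genuine subtlety here --- the paper's own proof silently assumes this reading --- but your justification for dismissing it is aimed at the wrong string.) A final, harmless overreach: you derive uniqueness of $q'$ from determinism, but the lemma only quantifies existentially and the product lexing automaton is not assumed deterministic, so you should track \emph{some} witness state rather than \emph{the} state.
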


\begin{proof}

The proof proceeds by induction. For the base case, let $w = \epsilon$. Then $\lexer(\epsilon) = (\epsilon, \epsilon)$ and $q_0 \xrightarrow{\epsilon:\epsilon}^\ast q_0 \in \delta^\ast$. This is the only possible transition for empty input, as $\epsilon \notin \lang(\automaton^i)$ for each $i$. 

Otherwise, we consider the remaining cases in the definition of $\lexer$. 
Cases (1)-(3) are by construction. 
Assume $\lexer(wr) = (T_1 \ldots T_k, w_r)$, then by the induction hypothesis, $q_0 \xrightarrow{w:T_1 \ldots T_k}^{\ast} q' \in \delta^\ast$ and $q_0 \xrightarrow{w_r:\epsilon}^\ast q' \in \delta^\ast$ for some $q' \in Q$.
The final case applies when $\lexer(wc) = \bot$, indicating that no valid tokenization of $wc$ can be produced. 
By assumption, our component automata are constructed without dead states. 
This ensures that there doesn't exists $q' \in Q$ such that $q_0 \xrightarrow{w_r:T_1 \ldots T_k}^{\ast} q' \in \delta^\ast$ when $w_r \notin \langpre(\automaton^i)$ for all $i$.
Conversely, if there doesn't exist $q' \in Q$ such that $q_0 \xrightarrow{w:T_1 \ldots T_k}^{\ast} q' \in \delta^\ast$, then by construction, cases (1)-(3) cannot hold so $\lexer(wc)$ must be $\bot$.

\end{proof}

The correctness of the detokenizing transducer is proven in the Appendix of \cite{koo2024automatabased}. 
The correctness of the token-level lexing transducer follows from the correctness of transducer composition.

\begin{theorem}
\label{thm:transducer}
Let $\transducer_{\automaton \circ \vocab} = (Q, \vocab, \terms, q_0, \delta, F)$ be a token-level lexing transducer for the lexer specification $\{(\automaton^i, T^i)\}_i$ and vocabulary $\vocab \subseteq \alphabet^+$. Then $q_0 \xrightarrow{w:T_1 \ldots T_k}^{\ast} q' \in \delta^\ast$
if and only if $\lexer(w) = (T_1 \ldots T_k, w_r)$ for some $w_r \in \alphabet^\ast$ and $q' \in Q$ such that $q_0 \xrightarrow{w_r:\epsilon}^\ast q'$.
\end{theorem}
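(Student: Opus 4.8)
The goal is to characterize exactly when the composed token-level transducer $\transducer_{\automaton \circ \vocab}$ emits a terminal sequence $T_1 \ldots T_k$ on reading a token string $w$, and to tie this to the behavior of the character-level lexer $\lexer$. The plan is to reduce \autoref{thm:transducer} to the already-established Lemma about the character-level lexing transducer $\transducer_\automaton$ via the correctness of transducer composition. The two ingredients I would invoke are: \rone the previously stated Lemma relating $\lexer(w)$ to paths $q_0 \xrightarrow{w:T_1 \ldots T_k}^\ast q'$ in the character-level transducer $\transducer_\automaton$, and \rtwo the correctness of the detokenizing transducer $\transducer_\vocab$ from \citet{koo2024automatabased}, which guarantees that $\transducer_\vocab$ maps a token string to exactly the concatenation of the characters those tokens spell out.

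First I would unfold the definition of functional composition: $\transducer_{\automaton \circ \vocab} = \transducer_\automaton \circ \transducer_\vocab$ means that a path in the composed transducer reading token string $w$ and outputting $T_1 \ldots T_k$ exists if and only if there is an intermediate character string $u \in \alphabet^\ast$ such that $\transducer_\vocab$ reads $w$ and outputs $u$, while $\transducer_\automaton$ reads $u$ and outputs $T_1 \ldots T_k$. By the correctness of the detokenizing transducer, the unique such $u$ is precisely the character-level spelling of the token sequence $w$; this is where the paper's observation that $\alphabet^\ast = \vocab^\ast$ (the vocabulary contains all single characters) ensures the detokenization is total and well-defined. Hence reading $w$ in $\transducer_{\automaton \circ \vocab}$ is semantically identical to reading its character expansion $u$ in $\transducer_\automaton$.

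Next I would apply the character-level Lemma to the expanded string $u$: the path $q_0 \xrightarrow{u:T_1 \ldots T_k}^\ast q'$ exists in $\transducer_\automaton$ if and only if $\lexer(u) = (T_1 \ldots T_k, w_r)$ with $q_0 \xrightarrow{w_r:\epsilon}^\ast q'$. Since $\lexer$ operates on characters and $u$ is exactly the character expansion of $w$, applying $\lexer$ to $u$ is the same as the intended lexing of $w$, giving the stated equivalence with the residual suffix $w_r$ and the reached state $q'$. Both directions of the iff then follow by reading the chain of equivalences forwards and backwards. I would also verify the determinization step preserves the input/output relation: since composition is performed before determinization and the paper's $\transducer_{\automaton \circ \vocab}$ is the determinized composite, I would note that determinization of a functional transducer preserves its string-to-string relation, so the characterization is unaffected.

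The main obstacle I anticipate is not the high-level chaining but the careful handling of the $\eos$/\texttt{\$} end-marker transitions and the boundary semantics of maximal munch: the intermediate string $u$ produced by the detokenizer must be aligned correctly with how $\transducer_\automaton$ segments tokens, so I would need to check that no terminal is spuriously emitted or suppressed at token boundaries inside $w$ (as opposed to only at the true end of input). Concretely, I must ensure that the suffix $w_r$ returned by $\lexer$ on $u$ is genuinely a residual that leaves the composite transducer in a state reachable by the $\epsilon$-emitting path $q_0 \xrightarrow{w_r:\epsilon}^\ast q'$, matching the Lemma's invariant. Once this alignment is confirmed, the theorem is a direct corollary of the Lemma plus composition correctness, so I would keep the formal argument short and delegate the routine verification of composition to the standard construction.
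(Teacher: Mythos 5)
Your proposal matches the paper's argument exactly: the paper also dispatches \autoref{thm:transducer} in two sentences, citing the correctness of the detokenizing transducer (proven in the appendix of \citet{koo2024automatabased}) and the correctness of transducer composition to reduce the claim to the character-level lemma, precisely the chain of equivalences you spell out. Your additional checks (determinization preserving the functional relation, boundary handling of \texttt{EOS} emissions) are more careful than what the paper writes down, but they elaborate rather than diverge from its route.
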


\subsection{Inverse Token Spanner Table}



By construction of inverse token spanner table, the following proposition holds.

\begin{proposition}
Given a token sequence $w \in \vocab^\ast$ such that $q_0 \xrightarrow{w: T_1 \ldots T_k}^\ast q'$, any $v \in \vocab$ and $T \in \terms$,
the token $v$ is in the inverse token spanner table entry $\inversetable(q', T_{k+1} \ldots T_m T)$
if and only if $q_0 \xrightarrow{wv: T_1 \ldots T_k T_{k+1} \ldots T_m}^\ast q''$ and $T \in \producible(q'')$.
\end{proposition}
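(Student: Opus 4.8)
The plan is to prove this proposition by directly unfolding the definition of the inverse token spanner table and appealing to the correctness of the token-level lexing transducer (Theorem~\ref{thm:transducer}) to connect the transducer-level statement to the determinism of the extended transition relation. The statement has the shape of an ``if and only if'' relating membership $v \in \inversetable(q', T_{k+1} \ldots T_m T)$ to the existence of a combined run $q_0 \xrightarrow{wv : T_1 \ldots T_m}^\ast q''$ together with $T \in \producible(q'')$. Since the hypothesis fixes $q_0 \xrightarrow{w : T_1 \ldots T_k}^\ast q'$, the essential content is that the single additional token $v$ drives the transducer from $q'$ to some $q''$ while emitting exactly the suffix $T_{k+1} \ldots T_m$, which is precisely the definitional content of $\inversetable(q', T_{k+1} \ldots T_m T)$.

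First I would recall the definition of the inverse token spanner table: $\inversetable(q', \alpha T) = \{ v \mid q' \xrightarrow{v : \alpha} q'' \in \delta \textrm{ and } T \in \producible(q'') \}$ for a realizable sequence $\alpha T$. Setting $\alpha = T_{k+1} \ldots T_m$, membership $v \in \inversetable(q', T_{k+1} \ldots T_m T)$ is by definition equivalent to the conjunction that there is a single transition $q' \xrightarrow{v : T_{k+1} \ldots T_m} q''$ in $\delta$ and $T \in \producible(q'')$. So the proposition reduces to showing that this single-transition condition from $q'$ is equivalent to the global run condition $q_0 \xrightarrow{wv : T_1 \ldots T_m}^\ast q''$ reaching the same $q''$ with the same emitted output, under the standing assumption $q_0 \xrightarrow{w : T_1 \ldots T_k}^\ast q'$.

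The forward composition direction is immediate: given $q_0 \xrightarrow{w : T_1 \ldots T_k}^\ast q'$ from the hypothesis and the single transition $q' \xrightarrow{v : T_{k+1} \ldots T_m} q''$, concatenating these runs via the inductive definition of $\delta^\ast$ yields $q_0 \xrightarrow{wv : T_1 \ldots T_m}^\ast q''$, and $T \in \producible(q'')$ is carried over unchanged. The converse direction is where I would be most careful, and it is the main obstacle: from a global run $q_0 \xrightarrow{wv : T_1 \ldots T_m}^\ast q''$ I must extract a factorization through $q'$ so that the $v$-portion is exactly a single transition emitting $T_{k+1} \ldots T_m$ and landing in $q''$. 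Here I would invoke that $\transducer_{\automaton \circ \vocab}$ is a \emph{determinized} FST (as stated in \autoref{sec:lexing} and used for Theorem~\ref{thm:transducer}): determinism of the extended transition relation on token inputs forces the prefix $w$ of the input $wv$ to lead to the unique state $q'$ from the hypothesis and to emit the unique prefix output $T_1 \ldots T_k$, so the remaining single token $v$ must fire a transition $q' \xrightarrow{v : T_{k+1} \ldots T_m} q''$ emitting exactly the leftover output and reaching the same $q''$.

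Finally I would note that the equivalence is purely a matter of unfolding definitions and matching up emitted outputs, so no genuinely new machinery is needed beyond determinism and the compositional semantics of $\delta^\ast$; the phrase ``by construction of the inverse token spanner table'' in the paper signals exactly this. The only subtlety worth flagging explicitly in the writeup is the uniqueness argument that licenses peeling off the last token $v$ cleanly, which is why I would state up front that $\transducer_{\automaton \circ \vocab}$ is deterministic over $\vocab$ and cite Theorem~\ref{thm:transducer} to guarantee that the transducer run on $w$ agrees with the lexer and therefore with the fixed state $q'$ and output $T_1 \ldots T_k$ given in the hypothesis.
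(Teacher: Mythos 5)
Your proof is correct and matches the paper's approach: the paper itself offers no written proof, asserting the proposition holds ``by construction of inverse token spanner table,'' and your unfolding of the definition of $\inversetable$ plus the composition of $\delta^\ast$-runs is exactly what that phrase elides. Your one substantive addition---invoking determinism of $\transducer_{\automaton \circ \vocab}$ to peel off the final token $v$ in the converse direction so that the prefix run on $w$ necessarily passes through $q'$ emitting $T_1 \ldots T_k$---is precisely the detail needed to make the ``only if'' direction rigorous, and it is consistent with the paper's explicit description of the transducer as determinized.
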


\subsection{Parser}


Given a context-free grammar $\grammar = (\terms, \nonterms, \start, \ruleset)$ with a separate lexer $\lexer: \Sigma^\ast \pfun \Gamma^\ast \times \Sigma^\ast$, we say that a token sequence $w \in \vocab^\ast$ is a \emph{sentence} if there exists a derivable sequence of terminals $T_1 \ldots T_k$ such that $\lexer(w) = (T_1 \ldots T_k \$, \epsilon)$. Formally, 
$$\lang^{\lexer}(\grammar) = \{w \in \vocab^\ast \mid \exists T_1 \ldots T_k.\, \lexer(w) = (T_1 \ldots T_k \$, \epsilon) \;\wedge\; T_1 \ldots T_k \$ \in \lang(\grammar)  \}$$

We also define the \emph{prefix language} of $\grammar$ with separate lexer $\lexer$ as the set of all prefixes of sentences in $\lang^\lexer(\grammar)$, that is,
$\langpre^\lexer(\grammar) = \{\sent \in \alphabet^* \mid \exists v \in \alphabet^\ast.\, \sent v \in \lang^\lexer(\grammar)\}$.

\begin{lemma}
\label{lem:completeness-lemma}
Consider a string $w \in \vocab^\ast$ with an extended transition $q \xrightarrow{w: T_1 \ldots T_k}^\ast q'$ through the lexing transducer, where $T_1 \ldots T_k$ is incomplete prefix of $\lang(\grammar)$ (i.e., $T_1 \ldots T_k \in \langpre(\grammar) \setminus \lang(\grammar)$).
If $w \in \langpre^\lexer(\grammar)$ then there must exist some terminal $T \in \producible(q')$ such that $T_1 \ldots T_k T \in \langpre(\grammar)$.
\end{lemma}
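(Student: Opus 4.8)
The plan is to unfold the hypothesis $w \in \langpre^\lexer(\grammar)$ into a concrete grammatical completion, transfer that completion back through the lexing transducer, and read off the required next terminal. Since $w \in \langpre^\lexer(\grammar)$, there is a continuation $v$ (ending in $\eos$) with $wv \in \lang^\lexer(\grammar)$; unfolding the definition of $\lang^\lexer$ yields terminals $U_1 \ldots U_m$ such that $\lexer(wv) = (U_1 \ldots U_m \$, \epsilon)$ and $U_1 \ldots U_m \$ \in \lang(\grammar)$. The goal is then to show that the terminal $U_{k+1}$ is the witness: it lies in $\producible(q')$ and satisfies $T_1 \ldots T_k U_{k+1} \in \langpre(\grammar)$.

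First I would pin down the relationship between the two lexer runs. By \autoref{thm:transducer} (instantiated at the initial state, so the source state $q$ is $q_0$), the given transition $q_0 \xrightarrow{w : T_1 \ldots T_k}^\ast q'$ forces $\lexer(w) = (T_1 \ldots T_k, w_r)$ for some unlexed remainder $w_r$. The key monotonicity step is that $T_1 \ldots T_k$ is a prefix of $U_1 \ldots U_m \$$. This follows from the 1-lookahead property of $\lexer$: reading $wv$ as $w$ followed by the characters of $v$ one at a time, each step either appends a single new terminal or grows the remainder, and never rewrites a previously emitted terminal. An induction on $|v|$ then shows that the terminal component of $\lexer(w)$ is an initial prefix of that of $\lexer(wv)$, i.e. $U_i = T_i$ for $1 \le i \le k$.

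Second, because $T_1 \ldots T_k \in \langpre(\grammar) \setminus \lang(\grammar)$ while $U_1 \ldots U_m \$ \in \lang(\grammar)$, the prefix is strict, so the index $k+1$ exists and I may set $T := U_{k+1}$. Then $T_1 \ldots T_k T = U_1 \ldots U_{k+1}$ is a prefix of the sentence $U_1 \ldots U_m \$$, hence $T_1 \ldots T_k T \in \langpre(\grammar)$, which is the grammatical half of the conclusion. (In the boundary case $k = m$ we have $T = \$$, produced by the $\eos$ transitions of the transducer, and $T_1 \ldots T_k \$ \in \lang(\grammar) \subseteq \langpre(\grammar)$ holds trivially.) For producibility I would factor the run on $wv$ through $q'$: since the lexing transducer is deterministic, the unique run on $wv$ passes through the state reached after $w$, namely $q'$, and its suffix run is $q' \xrightarrow{v : U_{k+1} \ldots U_m \$}^\ast q''$. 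The first terminal emitted on this suffix run is exactly $T = U_{k+1}$, so by the definition of $\producible$ we get $T \in \producible(q')$, completing the proof.

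I anticipate the main obstacle to be the monotonicity claim in the second paragraph, namely formally arguing that extending the input of a maximal-munch, 1-lookahead lexer only ever appends terminals and never revises earlier ones. This is precisely where the 1-lookahead assumption is indispensable (a lexer needing more lookahead could retokenize an already-emitted prefix), and making the induction airtight — including correct bookkeeping of the unlexed remainder $w_r$ across the final $\eos$-driven step — is the delicate part; everything else is direct unfolding of definitions together with determinism of the transducer.
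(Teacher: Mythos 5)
Your proof is correct and follows essentially the same route as the paper's: unfold $w \in \langpre^\lexer(\grammar)$ into a completion $v$ with $\lexer(wv) = (T_1 \ldots T_k T_{k+1} \ldots T_m \$, \epsilon)$, take $T = T_{k+1}$ as the witness, and read producibility off the suffix of the transducer run from $q'$. The only difference is that you explicitly discharge two steps the paper leaves implicit --- the prefix-stability of the lexer output under extension (which, as you note, also follows directly from determinism of $\transducer_{\automaton \circ \vocab}$, making the 1-lookahead induction somewhat redundant) and the boundary case $k = m$ where $T = \$$ --- which is a strictly more careful rendering of the same argument.
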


\begin{proof}
Suppose $w \in \langpre^\lexer(\grammar)$. By definition of the prefix language, there exists a non-empty suffix $v \in \vocab^+$ such that $wv \in \lang^\lexer(\grammar)$.
Consequently, $\lexer(wv)$ must yield a complete token sequence ending in a EOS token, say $(T_1 \ldots T_k T_{k+1} \ldots T_m \$, \epsilon)$, where
$T_1 \ldots T_k T_{k+1} \ldots T_m \$ \in \lang(\grammar)$.

Given our assumption that $T_1 \ldots T_k \notin \lang(\grammar)$, it follows that $T_{k+1} \ldots T_m$ cannot be empty. Specifically, this means $T_{k+1}$ must exist and be producible from the state $q'$, i.e.,$q' \in \producible(q')$.

Therefore, by combining $T_1 \ldots T_k$ with the next token $T_{k+1}$, we form a valid prefix of a complete token sequence $T_1 \ldots T_k T_{k+1} \in \langpre(\grammar)$.
\end{proof}

\begin{theorem}[Completeness]
Let $\pushdown = (\terms, \Pi, Q^\pushdown, q_0^\pushdown, Z_0, \delta^\pushdown, F^\pushdown)$ be a pushdown automaton such that $\lang(\pushdown) = \lang(\grammar)$.
Given a string $w \in \langpre^\lexer(\grammar)$ and a token $v \in \vocab$, the token $v$ is not masked by \autoref{alg:token-mask}  (i.e., $v \in \vocab_{\textit{allowed}}$) if the concatenation $wv$ belongs to $\langpre^\lexer(\grammar)$.
\end{theorem}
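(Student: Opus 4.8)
The plan is to run the lexer and the parser in parallel on $w$, then exhibit a single realizable terminal sequence $\alpha$ that is both \emph{produced by} $v$ (so that $v \in \inversetable(q^\automaton,\alpha)$) and correctly retained by \autoref{alg:token-mask}. First I would pin down the configurations the online algorithm is invoked with. After reading $w$, let $q_0 \xrightarrow{w: T_1 \ldots T_k}^\ast q^\automaton$ be the run of the token-level lexing transducer, so $T_1 \ldots T_k$ are the terminals emitted so far and $q^\automaton$ is the current lexer state; let $(q^\pushdown, \gamma)$ be the configuration of $\pushdown$ after consuming $T_1 \ldots T_k$. By \Cref{thm:transducer}, reading the single additional token $v$ from $q^\automaton$ emits some terminals $S_1 \ldots S_j$ and leaves the transducer in a state $q''$; because $\epsilon \notin \lang(\automaton^i)$ for every $i$, reading the (non-empty) character string of $v$ always leaves a non-empty pending token un-emitted at $q''$.

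Second, I would construct the lookahead terminal. Applying \Cref{lem:completeness-lemma} to $wv \in \langpre^\lexer(\grammar)$ yields a terminal $T \in \producible(q'')$ with $T_1 \ldots T_k S_1 \ldots S_j T \in \langpre(\grammar)$. Set $\alpha := S_1 \ldots S_j T$. The transition $q^\automaton \xrightarrow{v: S_1 \ldots S_j} q''$ together with $T \in \producible(q'')$ gives, by definition of the inverse token spanner table, both $\alpha \in \realizable{\vocab}{\automaton}$ and $v \in \inversetable(q^\automaton,\alpha)$. Moreover, since $\pushdown$ is a correct prefix-reading parser for $\grammar$ with $\lang(\pushdown)=\lang(\grammar)$ and $T_1 \ldots T_k\,\alpha \in \langpre(\grammar)$, the configuration $(q^\pushdown,\gamma)$ reached after $T_1 \ldots T_k$ can read $\alpha$ without blocking; that is, $\alpha$ is accepted by $\pushdown$ in state $q^\pushdown$ with stack $\gamma$ in the broad sense of the appendix.

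Third, I would close with the partition argument used by \autoref{alg:preprocess-parser}, which splits $\realizable{\vocab}{\automaton}$ at parser state $q^\pushdown$ into the always-accepted set $\bar A(q^\pushdown)$, the always-rejected set $\bar R(q^\pushdown)$, and the context-dependent set $\bar D(q^\pushdown)$, so $\alpha$ lies in exactly one. It cannot lie in $\bar R(q^\pushdown)$: by \Cref{prop:overapprox-fsa}, a sequence rejected by the stack-free $\automaton_\pushdown$ in $q^\pushdown$ is accepted by $\pushdown$ under no stack, contradicting that $\alpha$ is accepted with stack $\gamma$. If $\alpha \in \bar A(q^\pushdown)$, then $v \in \inversetable(q^\automaton,\alpha) \subseteq A(q^\automaton,q^\pushdown)$, the initial value of $\vocab_{\textit{allowed}}$. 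If $\alpha \in \bar D(q^\pushdown)$, then $\inversetable(q^\automaton,\alpha)\ni v$ is non-empty, so $\alpha \in D(q^\automaton,q^\pushdown)$; the loop of \autoref{alg:token-mask} reaches $\alpha$, its acceptance test under $\gamma$ succeeds by the previous paragraph, and $v$ is added. In either case $v \in \vocab_{\textit{allowed}}$.

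The main obstacle, where I expect to spend the care, is the second step: choosing the right lookahead $T$ and converting grammatical prefix-membership into PDA acceptance from a mid-computation configuration. In particular, \Cref{lem:completeness-lemma} presupposes that the emitted terminals form an \emph{incomplete} prefix ($T_1 \ldots T_k S_1 \ldots S_j \in \langpre(\grammar)\setminus\lang(\grammar)$), whereas for grammars admitting sentences that are prefixes of longer sentences the emitted terminals may already constitute a complete sentence while a pending token remains. I would patch this by noting that the pending token at $q''$ is non-empty, so the sentence witnessing $wv \in \langpre^\lexer(\grammar)$ must emit at least one further terminal $U_{n+1}$, which is producible from $q''$ and makes $T_1 \ldots T_k S_1 \ldots S_j U_{n+1}$ a valid prefix, recovering the needed $T$ directly from the continuation even when \Cref{lem:completeness-lemma} does not literally apply. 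The remaining delicate point is justifying that the mid-parse configuration $(q^\pushdown,\gamma)$ reads $\alpha$ whenever $T_1 \ldots T_k\,\alpha$ is a legal prefix, which relies on $\pushdown$ being a deterministic prefix-correct parser that never blocks on a legal prefix.
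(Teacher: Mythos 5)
Your proof is correct and follows essentially the same route as the paper's: identify the lexer run and state via \Cref{thm:transducer}, obtain the lookahead terminal $T$ from \Cref{lem:completeness-lemma}, and conclude $v \in \inversetable(q^\automaton,\alpha)$, with your third step merely spelling out the $A$/$D$/$R$ partition from \autoref{alg:preprocess-parser} (and the prefix-correctness of the PDA needed for the runtime acceptance check) that the paper's terser proof compresses into ``hence $v \in \vocab_{\textit{allowed}}$''. The edge case you patch cannot actually arise in this formalization, since sentences of $\lang(\grammar)$ end with the end-marker $\$$, which is emitted only upon reading \eos, so emitted terminal sequences with a pending token are never complete sentences and the incomplete-prefix hypothesis of \Cref{lem:completeness-lemma} holds automatically---your patch is harmless but unnecessary.
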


\begin{proof}
By \autoref{thm:transducer}, the lexing transducer emitted $T_1 \ldots T_k$ and reached a state $q'$ such that $q_0 \xrightarrow{w_r: \epsilon}^\ast q'$.
If $wv \in \langpre^\lexer(\grammar)$, then $q_0 \xrightarrow{wv: T_1 \ldots T_k T_{k+1} \ldots T_m} q''$ for some state $q''$, which implies $q' \xrightarrow{T_{k+1} \ldots T_m} q''$.
By Lemma~\ref{lem:completeness-lemma}, there exists $T \in \producible(q'')$ such that $T_1 \ldots T_k T_{k+1} \ldots T_m T \in \langpre(\grammar)$.
Therefore, $v \in \inversetable(q', T_{k+1} \ldots T_m T)$, and hence $v \in \vocab_{\textit{allowed}}$.
\end{proof}

\begin{lemma}
\label{lem:producible-single}
If $T \in \producible(q)$, then there exist $q' \in Q$ and $v \in \vocab^\ast$ such that $q \xrightarrow{v:T}^\ast q' \in \delta^\ast$. 
\end{lemma}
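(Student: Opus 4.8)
The plan is to unpack the definition of $\producible(q)$ into a concrete run of the lexing transducer and then \emph{truncate} that run at the precise moment the terminal $T$ is first emitted. By the definition of producible terminals, $T \in \producible(q)$ means there exist $w \in \alphabet^\ast$ and $q'$ with $q \xrightarrow{w : T_1 \ldots T_k}^\ast q'$ and $T_1 = T$, so such a run exists. My first step is to decompose this extended transition into its constituent single steps $q = p_0 \xrightarrow{c_1 : \sigma_1} p_1 \cdots \xrightarrow{c_n : \sigma_n} p_n = q'$, where $c_1 \cdots c_n = w$ and the concatenation $\sigma_1 \cdots \sigma_n$ equals $T_1 \ldots T_k$. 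Since $T = T_1$ is the first symbol of the output, the earliest step $j$ with $\sigma_j \neq \epsilon$ has $\sigma_j$ beginning with $T$, and every step before it emits $\epsilon$.

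The key step is to argue that this first emitting step emits \emph{exactly} the single terminal $T$, so that the prefix run $q \xrightarrow{c_1 \ldots c_j : T}^\ast p_j$ is a witness with $v := c_1 \ldots c_j$ and $q' := p_j$, and $v \in \alphabet^\ast \subseteq \vocab^\ast$ as required. This is immediate in the \emph{character-level} lexing transducer $\transducer_\automaton$ (\autoref{fig:char-transducer}): by the construction in \autoref{alg:fst-construction}, every ordinary (non-\eos) transition emits either $\epsilon$ or a single terminal, because terminals are emitted one at a time upon leaving a final state of the FSA. Hence $\sigma_j = T$, and truncating right after step $j$ finishes the argument.

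The main obstacle is that the lemma is stated over runs whose inputs lie in $\vocab^\ast$, and in the \emph{token-level} transducer $\transducer_{\automaton \circ \vocab}$ (\autoref{fig:token-transducer}) a single transition may emit several terminals at once (e.g.\ $q^{\texttt{B}}_2 \xrightarrow{\texttt{aba} : \texttt{BB}} q_1$), so one cannot simply truncate a token-level run to isolate the first terminal. I would resolve this with the equivalence the paper already establishes: since $\alphabet \subseteq \vocab$ (every character is itself a token) we have $\alphabet^\ast = \vocab^\ast$, and producibility coincides in $\transducer_\automaton$ and $\transducer_{\automaton \circ \vocab}$. Concretely, I would replay the offending multi-terminal token step as the corresponding sequence of single-character steps---legitimate because $\transducer_{\automaton \circ \vocab} = \transducer_\automaton \circ \transducer_\vocab$ and single-character tokens pass through the detokenizer unchanged---thereby refining the witnessing run into one in which each step emits at most one terminal; truncation then applies as above. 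The only remaining bookkeeping is the \eos corner case, where an emission carries the end marker in the form $T\$$; since $\$$ is not an ordinary terminal and $T$ is the genuine first terminal emitted, this is dispatched by treating the marker separately, leaving the single-terminal emission of $T$ intact.
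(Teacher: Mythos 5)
Your proposal is correct and follows essentially the same route as the paper's proof, which likewise unpacks $\producible(q)$ into a concrete run, invokes $\alphabet \subseteq \vocab$ together with the key property that the character-level lexing transducer emits at most one terminal per input character, and truncates the run at the first emission to obtain a prefix $v_p$ of the input witnessing $q \xrightarrow{v_p:T}{}^\ast q'$. If anything, you are more explicit than the paper, whose three-line proof silently glosses over both of the points you address directly: the replay of multi-terminal token-level steps as single-character steps (justified since single-character tokens pass through the detokenizer unchanged), and the \eos corner case where a transition emits $T\$$ rather than a lone terminal.
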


\begin{proof}
Since $T \in \producible(q')$, there exist $q'' \in Q$ and $v \in \vocab^\ast$ such that $q' \xrightarrow{v:T \alpha}^\ast q'' \in \delta^\ast$. 
Note that $\alphabet \subseteq \vocab$. 
A key property of the lexing transducer is that it produces at most one terminal for each input character. 
Consequently, there must exist a prefix $v_p$ of $v$ and a state $q''' \in Q$ such that  $q' \xrightarrow{v_p:T}^\ast q''' \in \delta^\ast$.
\end{proof}

\begin{lemma}
\label{lem:soundness-lemma}
Assume that if there exists some $v \in \vocab^\ast$ such that $q_0 \xrightarrow{w_r v:T} q$, then for any terminal sequence $\alpha \in \terms^\ast$, there also exists some $v' \in \vocab^\ast$ such that
$q_0 \xrightarrow{w_r v':T \alpha} q'$.
Consider a string $w \in \vocab^\ast$ with an extended transition $q \xrightarrow{w: T_1 \ldots T_k}^\ast q'$ through the lexing transducer, where $T_1 \ldots T_k$ is incomplete prefix of $\lang(\grammar)$ (i.e., $T_1 \ldots T_k \in \langpre(\grammar) \setminus \lang(\grammar)$).
If there exists $T \in \producible(q')$ such that $T_1 \ldots T_k T \in \langpre(\grammar)$ then $w \in \langpre^\lexer(\grammar)$.
\end{lemma}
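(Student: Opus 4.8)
The plan is to prove this soundness lemma as the converse of the completeness argument in Lemma~\ref{lem:completeness-lemma}: instead of extracting a producible next terminal from a lexer-level prefix, I would use the producible terminal $T$ to \emph{construct} an explicit token-string continuation of $w$ that lexes to a full sentence of $\grammar$, thereby exhibiting a witness for $w \in \langpre^\lexer(\grammar)$. The three ingredients I would combine are Theorem~\ref{thm:transducer} (to move between runs of the combined lexing transducer and the lexer function $\lexer$, keeping track of the buffered remainder $w_r$), Lemma~\ref{lem:producible-single} (to turn the abstract membership $T \in \producible(q')$ into a concrete path that actually emits $T$), and the extendability hypothesis stated in the lemma (to append whatever further terminals are needed to complete a sentence). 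Throughout I would read the source state as $q_0$, consistent with Theorem~\ref{thm:transducer} and the proof of Lemma~\ref{lem:completeness-lemma}.

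Concretely, first I would apply Theorem~\ref{thm:transducer} to $q_0 \xrightarrow{w:T_1 \ldots T_k}^\ast q'$ to obtain $\lexer(w) = (T_1 \ldots T_k, w_r)$ together with the buffer run $q_0 \xrightarrow{w_r:\epsilon}^\ast q'$. Next, since $T_1 \ldots T_k T \in \langpre(\grammar)$, I would unfold the definition of the prefix language to fix a completion $\beta \in \terms^\ast$ with $T_1 \ldots T_k T \beta \in \lang(\grammar)$; because sentences in this lexer model terminate with the end marker, $\beta$ ends in $\$$. Then, from $T \in \producible(q')$ and Lemma~\ref{lem:producible-single} I would get $u \in \vocab^\ast$ and $q''$ with $q' \xrightarrow{u:T}^\ast q''$, which composed with the buffer run yields $q_0 \xrightarrow{w_r u:T}^\ast q''$ --- exactly the premise of the extendability hypothesis.

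The key step is to instantiate the extendability hypothesis with $\alpha := \beta$, producing some $v'$ with $q_0 \xrightarrow{w_r v':T\beta}^\ast q'''$. Since reading $w_r$ emits $\epsilon$ and deterministically leads to $q'$, determinism of the combined transducer lets me factor this run as $q' \xrightarrow{v':T\beta}^\ast q'''$, so prepending the run on $w$ gives $q_0 \xrightarrow{wv':T_1 \ldots T_k T\beta}^\ast q'''$. Because $T\beta$ ends in $\$$ --- which is emitted only by an $\eos$ step, after which the lexing transducer returns to its initial (accepting) state with empty buffer --- Theorem~\ref{thm:transducer} forces $\lexer(wv') = (T_1 \ldots T_k T\beta, \epsilon)$. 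Combined with $T_1 \ldots T_k T\beta \in \lang(\grammar)$, this shows $wv' \in \lang^\lexer(\grammar)$, and therefore $w \in \langpre^\lexer(\grammar)$, as required.

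I expect the main obstacle to be the honest use of the extendability hypothesis, which is precisely the assumption that sidesteps the maximal-munch pathologies. A priori, emitting $T$ as a completed terminal could strand the lexer in a buffer state from which the specific continuation $\beta$ cannot be realized --- for instance, because greedy matching would merge or split the next terminals differently. The hypothesis rules this out by guaranteeing that any $T$-producing run can be extended to emit $T\alpha$ for an arbitrary $\alpha \in \terms^\ast$, so the remaining care is essentially end-marker bookkeeping: I must ensure the final $\$$ is produced by an $\eos$ transition so that the residual buffer is empty and $\lexer(wv')$ is a genuine complete sentence rather than a proper prefix.
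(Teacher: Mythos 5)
Your proposal is correct and takes essentially the same route as the paper's own proof: obtain a completion $\alpha$ of $T_1 \ldots T_k T$ from the definition of $\langpre(\grammar)$, use Lemma~\ref{lem:producible-single} to realize a $T$-emitting token path, invoke the extendability hypothesis to extend it to a path emitting $T\alpha$, and use the final $\$$ (forcing a return to the initial state with empty buffer) to conclude $\lexer(wv') = (T_1 \ldots T_k T \alpha, \epsilon)$ and hence $w \in \langpre^\lexer(\grammar)$. If anything, your write-up is more careful than the paper's, which leaves implicit the composition of the buffer run $q_0 \xrightarrow{w_r:\epsilon}{}^{\ast}\, q'$ with the producible path to establish the hypothesis's premise, and contains state-name slips (e.g., writing $q' \xrightarrow{wv':\,\cdot}$ where the source should be $q_0$) that your determinism-based factoring step resolves explicitly.
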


\begin{proof}
Suppose there exists $T \in \producible(q')$ such that $T_1 \ldots T_k T \in \langpre(\grammar)$. 
By the definition of prefix language, there exists $\alpha \in \terms^\ast$ such that $T_1 \ldots T_k T \alpha \in \lang(\grammar)$.
Furthermore, by Lemma~\ref{lem:producible-single}, for such a $T$, there exist some state $q' \in Q$ and a string $v \in \vocab^\ast$ such that $q' \xrightarrow{v:T}^\ast q'' \in \delta^\ast$. 
Now, by the assumption, there also exists a string $v' \in \vocab^\ast$ such that for some state $q''' \in Q$, $q' \xrightarrow{wv':T_1 \ldots T_k T \alpha}^\ast q''' \in \delta^\ast$. 
Here, since $\alpha$ must end with $\$$, it implies that $q'''$ must be the initial state $q_0$ by construction. 
Thus, we have $\lexer(wv') = (T_1 \ldots T_k T \alpha, \epsilon)$. 
Since $T_1 \ldots T_k T \alpha \in \lang(\grammar)$, this final result implies that $w \in \langpre^\lexer(\grammar)$.
\end{proof}

\begin{theorem}[Soundness]
Let $\pushdown = (\terms, \Pi, Q^\pushdown, q_0^\pushdown, Z_0, \delta^\pushdown, F^\pushdown)$ be a pushdown automaton such that $\lang(\pushdown) = \lang(\grammar)$.
Given a string $w \in \langpre^\lexer(\grammar)$ and a token $v \in \vocab$, the token $v$ is not masked by \autoref{alg:token-mask}  (i.e., $v \in \vocab_{\textit{allowed}}$) only if the concatenation $wv$ belongs to $\langpre^\lexer(\grammar)$.
\end{theorem}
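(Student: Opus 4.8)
The plan is to prove the contrapositive reading of the algorithm: assume $v \in \vocab_{\textit{allowed}}$ and derive $wv \in \langpre^\lexer(\grammar)$. This is the mirror image of the Completeness theorem, so I would reuse the same supporting machinery (\autoref{thm:transducer} and Lemma~\ref{lem:soundness-lemma}), read ``backwards,'' together with the Stack Invariance proposition to justify the always-accepted case.

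First I would unwind why $v$ landed in $\vocab_{\textit{allowed}}$ by inspecting \autoref{alg:token-mask}. There are exactly two ways: either $v \in A(q^\automaton, q^\pushdown)$, or $v \in \inversetable(q^\automaton, \alpha)$ for some context-dependent sequence $\alpha \in D(q^\automaton, q^\pushdown)$ that the loop explicitly verified to be accepted by $\pushdown$ in state $q^\pushdown$ with the current stack $\gamma$. I would unify these into one statement: in both cases there is a realizable terminal sequence $\alpha$ with $v \in \inversetable(q^\automaton, \alpha)$ that is accepted by $\pushdown$ from the configuration $(q^\pushdown,\gamma)$. For the always-accepted branch this step is where Stack Invariance is essential: by construction in \autoref{alg:preprocess-parser}, the set $\bar{A}(q^\pushdown)$ feeding $A(q^\automaton,q^\pushdown)$ collects sequences accepted from $q^\pushdown$ with the \emph{empty} stack, and the Stack Invariance proposition lifts acceptance under the empty stack to acceptance under the actual stack $\gamma$.

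Next I would translate ``$v \in \inversetable(q^\automaton, \alpha)$'' and ``$\alpha$ accepted at $(q^\pushdown,\gamma)$'' into transducer- and grammar-level facts. Writing $\alpha = T_{k+1}\ldots T_m\, T$, the definition of the inverse token spanner table yields a transition $q^\automaton \xrightarrow{v:T_{k+1}\ldots T_m} q''$ with $T \in \producible(q'')$. Composing this with the run $q_0 \xrightarrow{w:T_1\ldots T_k} q^\automaton$ supplied by \autoref{thm:transducer} gives $q_0 \xrightarrow{wv:T_1\ldots T_m} q''$, so $\lexer(wv)$ emits exactly $T_1\ldots T_m$ and lands in $q''$. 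On the parser side, the configuration $(q^\pushdown,\gamma)$ is precisely the one reached after consuming $T_1\ldots T_k$, and $\pushdown$ accepts $\alpha = T_{k+1}\ldots T_m\, T$ there; hence $T_1\ldots T_m\, T$ is a valid run of $\pushdown$, and since the grammar/PDA is reduced (every valid run extends to a sentence) we obtain $T_1\ldots T_m\, T \in \langpre(\grammar)$.

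Finally I would invoke Lemma~\ref{lem:soundness-lemma} with the string $wv$, produced terminals $T_1\ldots T_m$, reached state $q''$, and producible terminal $T$: its hypotheses---the extended transition $q_0 \xrightarrow{wv:T_1\ldots T_m} q''$, the membership $T \in \producible(q'')$, and $T_1\ldots T_m\, T \in \langpre(\grammar)$---are exactly what the previous step established, so it concludes $wv \in \langpre^\lexer(\grammar)$ (the case where $T_1\ldots T_m$ is already a complete sentence being immediate, leaving the incomplete-prefix hypothesis of the lemma as the substantive case). I expect the main obstacle to be the bridge in the third paragraph: arguing rigorously that PDA acceptance of $\alpha$ from $(q^\pushdown,\gamma)$---under the paper's broad, final-state-free notion of ``accepted''---genuinely certifies prefix-language membership. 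This is where reducedness of the parser and the Stack Invariance proposition must jointly rule out spurious acceptances that the actual stack would forbid, so that the overapproximation used in preprocessing does not let through a token whose continuation the real stack configuration cannot support.
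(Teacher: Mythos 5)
Your proposal is correct and follows essentially the same route as the paper's proof: recover the lexer run via \autoref{thm:transducer}, use the definition of $\inversetable$ to extract the extended transition $q_0 \xrightarrow{wv:T_1\ldots T_m} q''$ with $T \in \producible(q'')$ and $T_1\ldots T_m T \in \langpre(\grammar)$, and conclude $wv \in \langpre^\lexer(\grammar)$ by Lemma~\ref{lem:soundness-lemma}. Your explicit case split between the always-accepted table $A$ (discharged via Stack Invariance) and the context-dependent table $D$ (discharged by the runtime check in \autoref{alg:token-mask}) makes rigorous a step the paper compresses into ``by the construction of the inverse token spanner table,'' so if anything your write-up is the more careful one.
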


\begin{proof}
By \autoref{thm:transducer}, the lexing transducer emitted $T_1 \ldots T_k$ and reached a state $q'$ such that $q_0 \xrightarrow{w_r: \epsilon}^\ast q'$.
If $v \in \vocab_{\textit{allowed}}$, then there must exist some $T \in \terms$ such that $v \in \inversetable(q', T_{k+1} \ldots T_m T)$.
By the construction of inverse token spanner table, this implies that the concatenation $T_1 \ldots T_k T_{k+1} \ldots T_m T$ is in the prefix language $\langpre(\grammar)$.
Furthermore, this means that there exist some $q'' \in Q$ such that $q' \xrightarrow{T_{k+1} \ldots T_m} q''$ and $T \in \producible(q'')$.
Therefore, by Lemma~\ref{lem:soundness-lemma}, the concatenation $wv$ belongs to $\langpre^\lexer(\grammar)$.

\end{proof}

\end{document}